\newtheorem{definition}{Definition}[section]
\newtheorem{theorem}{Theorem}[section]
  \providecommand\BibTeX{{%
    \normalfont B\kern-0.5em{\scshape i\kern-0.25em b}\kern-0.8em\TeX}}}
\begin{document}

%%
%% The "title" command has an optional parameter,
%% allowing the author to define a "short title" to be used in page headers.
\title{Federated Heterogeneous Graph Neural Network for Privacy-preserving Recommendation}

%%
%% The "author" command and its associated commands are used to define
%% the authors and their affiliations.
%% Of note is the shared affiliation of the first two authors, and the
%% "authornote" and "authornotemark" commands
%% used to denote shared contribution to the research.
\author{Bo Yan}
\orcid{0000-0003-1750-3803}
\affiliation{%
  \institution{Beijing University of Posts and Telecommunications}
  \city{Beijing}
  \country{China}
}
\email{boyan@bupt.edu.cn}

\author{Yang Cao}
\orcid{0000-0002-6424-8633}
\affiliation{%
  \institution{Hokkaido University}
  \city{Sapporo}
  \country{Japan}
}
\email{yang@ist.hokudai.ac.jp}

\author{Haoyu Wang}
\orcid{0009-0004-1010-0723}
\affiliation{%
  \institution{Beijing University of Posts and Telecommunications}
    \city{Beijing}
  \country{China}
}
\email{wanghaoyu666@bupt.edu.cn}

\author{Wenchuan Yang}
\orcid{0000-0003-3194-1690}
\affiliation{%
 \institution{National University of Defense Technology}
 \city{Changsha}
 \country{China}}
 \email{wenchuanyang97@163.com}

\author{Junping Du}
\orcid{0000-0002-9402-3806}
\affiliation{%
  \institution{Beijing University of Posts and Telecommunications}
  \city{Beijing}
  \country{China}}
\email{junpingdu@126.com}

\author{Chuan Shi}
\orcid{0000-0002-3734-0266}
\authornote{Corresponding author.}
\affiliation{%
  \institution{Beijing University of Posts and Telecommunications}
  \city{Beijing}
  \country{China}}
\email{shichuan@bupt.edu.cn}

%%
%% By default, the full list of authors will be used in the page
%% headers. Often, this list is too long, and will overlap
%% other information printed in the page headers. This command allows
%% the author to define a more concise list
%% of authors' names for this purpose.
\renewcommand{\shortauthors}{Bo Yan et al.}

%%
%% The abstract is a short summary of the work to be presented in the
%% article.
\begin{abstract}
% Recommender systems help users discover relevant items or content based on their historical behaviors. Many efforts have dedicated to this by modeling user-item interactions. However, only utilizing user-item interactions may suffer cold-start issue due to the data sparsity. 
%multi-typed entities and higher-order interactions
The heterogeneous information network (HIN), which contains rich semantics depicted by meta-paths, has emerged as a potent tool for mitigating data sparsity in recommender systems. Existing HIN-based recommender systems operate under the assumption of centralized storage and model training. However, real-world data is often distributed due to privacy concerns, leading to the semantic broken issue within HINs and consequent failures in centralized HIN-based recommendations. In this paper, we suggest the HIN is partitioned into private HINs stored on the client side and shared HINs on the server. Following this setting, we propose a federated heterogeneous graph neural network (FedHGNN) based framework, which facilitates collaborative training of a recommendation model using distributed HINs while protecting user privacy. Specifically, we first formalize the privacy definition for HIN-based federated recommendation (FedRec) in the light of differential privacy, with the goal of protecting user-item interactions within private HIN as well as users' high-order patterns from shared HINs. To recover the broken meta-path based semantics and ensure proposed privacy measures, we elaborately design a semantic-preserving user interactions publishing method, which locally perturbs user's high-order patterns and related user-item interactions for publishing. Subsequently, we introduce an HGNN model for recommendation, which conducts node- and semantic-level aggregations to capture recovered semantics. Extensive experiments on four datasets demonstrate that our model outperforms existing methods by a substantial margin (up to 34\% in HR@10 and 42\% in NDCG@10) under a reasonable privacy budget (e.g., $\epsilon=1$).
\end{abstract}

% Accordingly, the HIN is partitioned into multiple parts and the meta-path based semantics are thus broken.  and recover the semantic information

%%
%% The code below is generated by the tool at http://dl.acm.org/ccs.cfm.
%% Please copy and paste the code instead of the example below.
%%
% \begin{CCSXML}
% <ccs2012>
%  <concept>
%   <concept_id>00000000.0000000.0000000</concept_id>
%   <concept_desc>Do Not Use This Code, Generate the Correct Terms for Your Paper</concept_desc>
%   <concept_significance>500</concept_significance>
%  </concept>
%  <concept>
%   <concept_id>00000000.00000000.00000000</concept_id>
%   <concept_desc>Do Not Use This Code, Generate the Correct Terms for Your Paper</concept_desc>
%   <concept_significance>300</concept_significance>
%  </concept>
%  <concept>
%   <concept_id>00000000.00000000.00000000</concept_id>
%   <concept_desc>Do Not Use This Code, Generate the Correct Terms for Your Paper</concept_desc>
%   <concept_significance>100</concept_significance>
%  </concept>
%  <concept>
%   <concept_id>00000000.00000000.00000000</concept_id>
%   <concept_desc>Do Not Use This Code, Generate the Correct Terms for Your Paper</concept_desc>
%   <concept_significance>100</concept_significance>
%  </concept>
% </ccs2012>
% \end{CCSXML}

% \ccsdesc[500]{Do Not Use This Code~Generate the Correct Terms for Your Paper}

\begin{CCSXML}
<ccs2012>
   <concept>
       <concept_id>10002978.10002991.10002995</concept_id>
       <concept_desc>Security and privacy~Privacy-preserving protocols</concept_desc>
       <concept_significance>500</concept_significance>
       </concept>
   <concept>
       <concept_id>10002951.10003260.10003261.10003270</concept_id>
       <concept_desc>Information systems~Social recommendation</concept_desc>
       <concept_significance>500</concept_significance>
       </concept>
 </ccs2012>
\end{CCSXML}

\ccsdesc[500]{Security and privacy~Privacy-preserving protocols}
\ccsdesc[500]{Information systems~Social recommendation}

%%
%% Keywords. The author(s) should pick words that accurately describe
%% the work being presented. Separate the keywords with commas.
\keywords{federated recommendation, heterogeneous information network, privacy-preserving, differential privacy}

%% A "teaser" image appears between the author and affiliation
%% information and the body of the document, and typically spans the
%% page.

% \received{20 February 2007}
% \received[revised]{12 March 2009}
% \received[accepted]{5 June 2009}

%%
%% This command processes the author and affiliation and title
%% information and builds the first part of the formatted document.
\maketitle

\section{Introduction}

\begin{figure}[t]
    \centering
    \includegraphics[scale=.53]{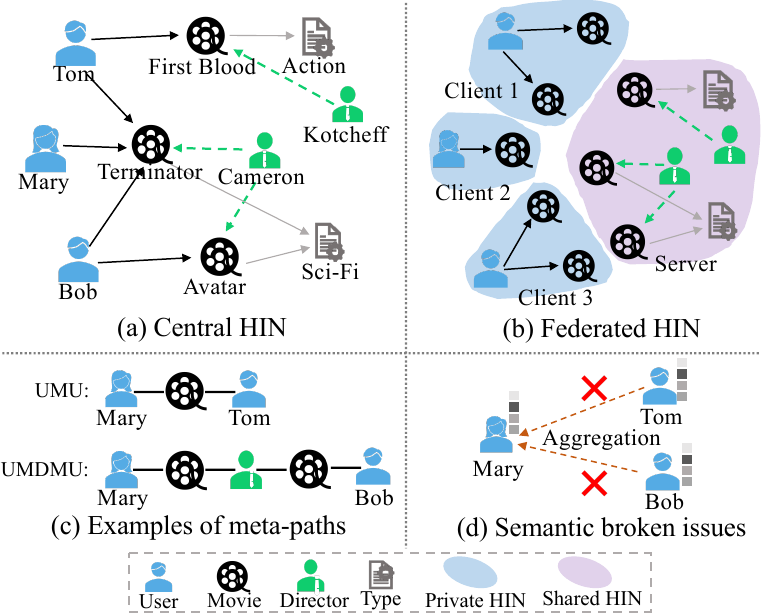}
    \caption{Comparison of a HIN in the centralized setting and federated setting}
    \label{fig:hin}
\end{figure}

% Many efforts have dedicated to recommenders by modeling user-item interactions \cite{DBLP:conf/www/HeLZNHC17, DBLP:conf/nips/SalakhutdinovM07, DBLP:conf/kdd/YingHCEHL18}. However, only utilizing user-item interactions may suffer from cold-start issue due to the data sparsity.
% richer semantics via high-order graph structure, have been extensively studied to tackle data sparsity issues in recommendations \cite{DBLP:conf/kdd/ZhaoYLSL17, DBLP:conf/esws/SchlichtkrullKB18, DBLP:journals/tkde/ShiHZY19, DBLP:conf/www/HuDWS20}.

Recommender systems play a crucial role in mitigating the challenges posed by information overload in various online applications \cite{DBLP:conf/www/ZhouZY23, DBLP:journals/dase/ChenCWFZLDXG23, DBLP:journals/dase/LiCGW23}. However, their effectiveness is limited by the sparsity of user interactions \cite{DBLP:conf/sigir/LiuOMM20, DBLP:conf/kdd/ZhengMGZ21, DBLP:conf/kdd/Lu0S20}. To tackle this issue, heterogeneous information networks (HIN), containing multi-typed entities and relations, have been extensively utilized to enhance the connections of users and items \cite{DBLP:conf/kdd/ZhaoYLSL17, DBLP:conf/esws/SchlichtkrullKB18, DBLP:journals/tkde/ShiHZY19, DBLP:conf/www/HuDWS20}. As a fundamental analysis tool in HIN, \textit{meta-path} \cite{DBLP:journals/tkde/ShiLZSY17}, a relation sequence connecting node pairs, is widely used to capture rich semantics of HINs. Different meta-path can depict distinct semantics, as illustrated in Figure \ref{fig:hin}, the meta-path \textit{UMU} in the HIN for movie recommendation signifies that two users have watched the same movie, and the \textit{UMDMU} depicts that two users have watched movies directed by the same director. Most HIN-based recommender methods leverage meta-path based semantics to learn effective user and item embeddings \cite{DBLP:journals/tkde/ShiHZY19, DBLP:conf/kdd/HuSZY18}. Among them, early works integrate meta-path based semantics into user-item interaction modeling to enhance their representations \cite{DBLP:journals/tkde/ShiHZY19, DBLP:conf/kdd/ZhaoYLSL17}. In recent years, the advent of graph neural networks (GNNs) have introduced a powerful approach to automatically capture meta-path based semantics, achieving remarkable results \cite{DBLP:conf/kdd/WangLHS21, DBLP:conf/www/0004ZMK20, DBLP:conf/sigir/ZhengMW22}. These GNN-based methods aggregate node embeddings along meta-paths to fuse different semantics, known as meth-path based neighbor aggregation \cite{DBLP:conf/kdd/FanZHSHML19,DBLP:conf/aaai/JiZWSWTLH21, DBLP:conf/www/WangJSWYCY19, DBLP:conf/kdd/ZhengMGZ21}, providing a more flexible framework for HIN-based recommendations.

% early works integrate
% meta-path based similarity into user-item interaction modeling to
% enhance the representations of users and items
% while protecting the privacy of their own data
% depend on all parties (users) being willing to share their own interaction data, and thus train a global model in central data storage
Existing HIN-based recommendations hold a basic assumption that the data is centrally stored. As shown in Figure \ref{fig:hin}(a) and (c), under this assumption, the entire HIN is visible and can be directly utilized to capture meta-path based semantics for recommendation. However, this assumption may not hold in reality since the user-item interaction data is highly privacy-sensitive, and the centralized storage can leak user privacy. Additionally, strict privacy protection regulations, such as the General Data Protection Regulation (GDPR)\footnote{https://gdpr-info.eu}, prohibit commercial companies from collecting and exchanging user data without the user's permission. Therefore, centralized data storage may not be feasible in the future. As a more realistic learning paradigm, federated learning (FL) \cite{DBLP:conf/aistats/McMahanMRHA17, DBLP:journals/tist/YangLCT19} has emerged to enable users to keep their personal data locally and jointly train a global model by transmitting only intermediate parameters. Federated recommendation (FedRec) is a crucial application of FL in the recommender scenario and many works have been dedicated to FedRec in recent years\cite{DBLP:journals/expert/ChaiWCY21, DBLP:journals/corr/abs-1901-09888, DBLP:conf/recsys/LinP021, DBLP:journals/corr/abs-2008-07759}. Most of them focus on traditional matrix factorization (MF) based FedRec \cite{DBLP:journals/expert/ChaiWCY21, DBLP:conf/recsys/LinP021}, where user embeddings are kept locally updated, and gradients of item embeddings are uploaded to the server for aggregation. Recently, a few studies have explored GNN-based FedRec \cite{wu2022federated, DBLP:journals/tist/LiuYFPY22, DBLP:conf/cikm/LuoXS22}. They train local GNN models on the user-item bipartite graph and transmit gradients of embeddings and model parameters. Despite their success, they still suffer from data sparsity issues, which are further compounded by the distributed data storage.

% none of them utilize rich semantic information in HINs and perform HIN-based FedRec. 

% Based on this, we divide the whole HIN into multiple parts in the federated setting, i.e., private HINs stored in the clients and shared HINs stored in the server. Since the shared HINs are public, it may also leak user's high-order patterns (e.g., the user's favorite types of movies).

A natural solution is utilizing HINs to enrich the sparse interactions. However, developing HIN-based FedRec is non-trivial. It faces two significant challenges. 1) \textit{How to give a formal definition of privacy in HIN-based FedRec?}
Traditional definitions based on solely user-item interactions may be infeasible in the HIN-based FedRec. Besides private user-item interactions, HIN-based FedRec can also utilize additional shared knowledge that contains no privacy and can be shared among users (e.g., movie-type and movie-director relations in Figure \ref{fig:hin}(a)), which may also expose the user’s high-order patterns, such as their favorite types of movies. Therefore, we should carefully consider the realistic privacy constraints of HIN-based FedRec and give a rigorous privacy definition so that the privacy can be rigorously protected.  2) \textit{How to recover the broken semantics for HIN-based FedRec while protecting the defined privacy?} The HIN is stored in a distributed manner in FedRec, as shown in Figure \ref{fig:hin}(b), and users can only access their one-hop neighbors (interacted items). As a result, the integral semantics depicted by the meta-path are broken, leading to failure to conduct meta-path based neighbor aggregations, which is the key component for HIN-based recommendation. As depicted in Figure \ref{fig:hin}(c) and (d), the meta-path based neighbor aggregations fail because of the broken semantics UMU and UMDMU. However, due to privacy constraints, it’s unrealistic to directly exchange the user interaction data. Thus, it’s challenging to recover the semantics with privacy guarantees.

To tackle these challenges, we delve into the study of HIN-based FedRec and propose a \underline{Fed}erated \underline{H}eterogeneous \underline{G}raph \underline{N}eural \underline{N}etwork (FedHGNN) for privacy-preserving recommendations. 1) To clarify the privacy that should be protected, we present a formal privacy definition for HIN-based FedRec. We suggest a realistic setting that the entire HIN is partitioned into private HINs stored on the client side and shared HINs on the server side. Under this setting, we rigorously formalize two kinds of privacy for HIN-based FedRec in the light of differential privacy \cite{DBLP:journals/fttcs/DworkR14}, including the privacy reflecting the user's high-order patterns from shared HINs and the privacy of user-item interactions with specific patterns in the private HIN. 2) To recover the broken semantics while protecting proposed privacy, we introduce a semantic-preserving user interaction publishing algorithm, the core of which is a two-stage perturbation mechanism. Specifically, the first stage perturbs the user's high-order patterns from shared HINs by a dedicated designed exponential mechanism (EM) \cite{DBLP:journals/fttcs/DworkR14}. To maintain the utility of perturbed data, we select the shared HINs that are relevant to the user's true patterns with a higher probability. The second stage perturbs user-item interactions within each selected shared HIN in a degree-preserving manner, which avoids introducing more noise and also enhances the interaction diversity. Users perturb their interactions locally by the two-stage perturbation mechanism and upload them to the server for recovering semantics. We also provide rigorous privacy guarantees for this publishing process. Based on the recovered semantics, we further propose a general heterogeneous GNN model for recommendation, which captures semantics through a two-level meta-path-guided aggregation.

The major contributions of this paper are summarized as follows:

\begin{itemize}
  \item To the best of our knowledge, this is the first work to study the HIN-based FedRec, which is an important and practical task in real-world scenarios. 
  
  \item We design a FedHGNN framework for HIN-based FedRec. We give a formal privacy definition and propose a novel semantic-preserving perturbation method to publish user interactions for recommendation. We also give rigorous privacy guarantees for the publishing process.
  
  % It leverages rich semantic information of HIN by performing meta-path-guided neighbor aggregations. 
  \item We conduct extensive experiments on four real-world datasets, which demonstrate that FedHGNN improves existing FedRec methods by up to 34\% in HR@10 and 42\% in NDCG@10, while maintaining a reasonable privacy budget. Additionally, FedHGNN achieves comparable or even superior results compared to centralized methods.

\end{itemize}

%$Y \in \mathbb{R}^{|U|\times |V|}$ denotes the implicit feedback matrix, in which $Y_{u} \in {0,1}^{|V|}$ denotes $u$ 
\section{Preliminary}
In this paper, we conduct HIN-based FedRec for implicit feedback. Let $U$ and $I$ denote the user set and item set. We give the related concepts as follows. 
% In recommender systems, various kinds of entities can be modeled by a heterogeneous information network(HIN), which can be defined as follows: 
\subsection{Heterogeneous Information Network}
\begin{definition}
\textbf{Heterogeneous Information Network (HIN)} \cite{DBLP:journals/tbd/WangBSFYY23}. A HIN $G=(V, E)$ consists of an object set $V$ and a link set $E$. It is also associated with an object type mapping function $\phi: V \rightarrow \mathcal{A}$ and a link type mapping function $\psi: E \rightarrow \mathcal{R}$$     $. $\mathcal{A}$ and $\mathcal{R}$ are the predefined sets of object and link types, where $|\mathcal{A}|+|\mathcal{R}| > 2$. 
\end{definition}

% \noindent \textbf{Definition 1. Heterogeneous Information Network (HIN)}. A HIN $G=(V,E)$ consists of an object set $V$ and a link set $E$. It is also associated with an object type mapping function $\phi: V \rightarrow \mathcal{A}$  and a link type mapping function $\psi: E \rightarrow \mathcal{R}$$     $. $\mathcal{A}$ and $\mathcal{R}$ are the predefined sets of object and link types, where $|\mathcal{A}|+|\mathcal{R}| > 2$.

\begin{definition}
\textbf{Meta-path}. Given a HIN $G$ with object types $\mathcal{A}$ and link types $\mathcal{R}$, a meta-path $\rho$ can be denoted as a path in the form of $A_1 \xrightarrow{R_1} A_2 \xrightarrow{R_2}\cdots \xrightarrow{R_l} A_{l+1}$, where $A_i \in \mathcal{A}$ and $R_i \in \mathcal{R}$. Meta-path describes a composite relation $R=R_1 \circ R_2 \circ ... \circ R_l$ between object $A_1$ and $A_{l+1}$, where $\circ$ denotes the composition operator on relations. Then given a node $v$ and a meta-path $\rho$, the \textbf{meta-path based neighbors} $\mathcal{N}_v^{\rho}$ of $v$ are the nodes connecting with $v$ via the meta-path $\rho$. In a HIN, the rich semantics between two objects can be captured by multiple meta-paths.
\end{definition}

% In a HIN, the rich semantics between two objects can be captured by the meta-path. As depicted in Figure \ref{fig:hin}(a) and (c), the UMU and UMDMU meta-paths capture different semantics between the user Mary and her two meta-path based neighbors Tom and Bob.

\subsection{Privacy Definition}
\label{sec:3.2}

\begin{definition}
    \textbf{Private HIN}. A private HIN $G_p = (V_p, E_p)$ is defined as a subgraph of $G$. It is associated with an object type mapping function $\phi_p:V_p \rightarrow \mathcal{A}$ and a link type mapping function $\psi_p:E_p \rightarrow \mathcal{R}_p$, where $\mathcal{R}_p \subset \mathcal{R}$ is the set of private link types. A \textbf{user-level private HIN} contains a user $u \in V_p$ and its interacted item set $I^u \subset I$. The link set $E^u_p$ exists between $u$ and $I^u$ denoting personally private interactions.
\end{definition}

%which should be protected in our setting.

% $G_p^u=(V^u_p,E^u_p)$ is a kind of private HIN. It is associated with an object type mapping function $\phi:V_p^u \rightarrow \mathcal{A}$ and a link type mapping function $\psi_p^u:E_p^u \rightarrow \mathcal{R}_p$. The object set $V_p^u$ consists of a user $u$ and its interacted items. The link set $E_p^u$ contains privacy interactions between $u$ and its interacted items.

\begin{definition}
    \textbf{Shared HIN}. A shared HIN $G_s=(V_s,E_s)$ is defined as a subgraph of  $G$. It is associated with an object type mapping function $\phi_s: V_s \rightarrow \mathcal{A}$ and a link type mapping function $\psi_s: E_s \rightarrow \mathcal{R}_s$, where $\mathcal{R}_s$ is the set of shared link types. 
\end{definition}

As depicted in Figure \ref{fig:hin}(a) and (b), under federated setting, the movie network is divided into user-level private HINs stored in each client and shared HINs stored in the server. A user-level private HIN includes a user's private interactions while shared HINs contain shared knowledge such as movie-director relations. 

% As shown in Figure \ref{fig:hin}, each type node links to many movie nodes while each movie node only have one link to type node.
%we assume each item only belong to one shared HIN
%a shared HIN is depicted by the \textit{Action} node and movies that link to this node (\textit{First Blood})
% For simplicity, we assume 1) each item only belongs to one shared HIN and 2) a shared HIN only includes a category node and multiple item nodes that belong to this category. As is shown in Figure \ref{fig:hin}, a shared HIN contains a type node and multiple movie nodes belongs to this type. This simplification is feasible since we can achieve it by clustering items based on complex shared knowledge. 

A user $u$ could associate with many shared HINs based on interacted items. For example, Figure \ref{fig:hin} (a) and (b) depict that two shared HINs are related to Tom, and one shared HIN is related to Mary. These user-related shared HINs reflect high-order patterns of users (e.g., favorite types of movies) and should be protected. We call this privacy as \textit{semantic privacy}, denoted as a user-related shared HIN list $g = (g_1, \cdots, g_{|\mathcal{G}_s|}) \in \{0,1\}^{|\mathcal{G}_s|}$, where $\mathcal{G}_s$ denotes the whole shared HIN set. Then we formalize semantic privacy as follows:  

% \noindent \textbf{Definition 5.  $\epsilon$-edge Local Differential Privacy}\cite{DBLP:conf/ccs/QinYYKX017}. A randomized mechanism $M$ satisfies $\epsilon$-edge local differential privacy ($\epsilon$-edge LDP) if and only if for any two neighbor lists $a$ and $a^{\prime}$, such that $a$ and $a^{\prime}$ only differ in one bit, and any $s \in range(M)$, we have $\frac{Pr[M(a)=s]}{Pr[M(a^{\prime})=s]} \leq e^{\epsilon}$.

% \noindent \textbf{Definition 6. HIN Guided Neighbors}. Given shared HINs $\mathcal{G}_s$ including an item set $I_s$, and an adjacency list $a=(a_1,\cdots,a_{|I_s|}) \in \{0,1\}^{|I_s|}$ defined on $I_s$, the neighbors of $a$ is the set of all adjacency lists which only differ in one bit with $a$,
% i.e., $\mathcal{N}(a) = \{a^{\prime}|d_{hm}(a,a^{\prime})=1\&|a|=\}$, where $d_{hm}$ denotes hamming distance.

\begin{definition}
    \textbf{$\epsilon$-Semantic Privacy}. Given a user-related shared HIN list $g$, a perturbation mechanism $\mathcal{M}$ satisfies $\epsilon$-semantic privacy if and only if for any $\hat{g}$, such that $g$ and $\hat{g}$ only differ in one bit, and any $\tilde{g} \in range(\mathcal{M})$, we have $\frac{Pr[\mathcal{M}(g)=\tilde{g}]}{Pr[\mathcal{M}(\hat{g})=\tilde{g}]} \leq e^{\epsilon}$. 
\end{definition}

Besides, each user also owns a adjacency list $a=(a_1,\cdots, a_{|I|}) \in {\{0,1\}}^{|I|}$. Given $g$, we can extract a subset $I_s$ from the whole item set $I$, which is called \textit{semantic guided item set}. Similarly, we can obtain \textit{semantic guided adjacency list} denoted as $a_s=(a_{s1},\cdots, a_{s|I_s|}) \in {\{0,1\}}^{|I_s|}$, which depicts the user-item interactions with specific patterns and should also be protected. we call this privacy as \textit{semantic guided interaction privacy} and formalize as:

\begin{definition}
    \textbf{$\epsilon$-Semantic Guided Interaction Privacy}. Given a semantic guided adjacency list $a_s$, a perturbation mechanism $\mathcal{M}$ satisfies $\epsilon$-semantic guided interaction privacy if and only if for any $\hat{a_s}$, such that $a_s$ and $\hat{a_s}$ only differ in one bit, and any $\tilde{a_s} \in range(\mathcal{M})$, we have $\frac{Pr[\mathcal{M}(a_s)=\tilde{a_s}]}{Pr[\mathcal{M}(\hat{a_s})=\tilde{a_s}]} \leq e^{\epsilon}$.
\end{definition}

$\epsilon$ is called the privacy budget that controls the strength of privacy protection. It is obvious that if a perturbation algorithm satisfies these definitions, the attacker is hard to distinguish the user's high-order pattern as well as the true interacted items. 

% We then introduce a perturbation method exponential mechanism (EM): 

% Compared We argue that this privacy definition provides a more flexible way to protect data in federated HIN-based recommendations. For example, if users are willing to share their high-order interaction patterns, then we only need to design mechanism to satisfy $\epsilon$-semantic guided interaction privacy. In another case, if users want to share a part of high-order interaction patterns (e.g., reading academic books), we can exclude the academic book from semantic-privacy and only perturb specific  
% The semantic guided interaction privacy is related to the given shared HIN list $g$. 

% Given shared HINs $\mathcal{G}_s$ including an item set $I_s$, a randomized mechanism $M$ satisfies $\epsilon$-HIN guided interaction privacy if and only if for any two neighbor lists $a$ and $a^{\prime}$ defined on $I_s$, and any $s \in range(M)$, we have $\frac{Pr[M(a)=s]}{Pr[M(a^{\prime})=s]} \leq e^{\epsilon}$.

\begin{figure*}[h]
    \centering
    \includegraphics[scale=.6]{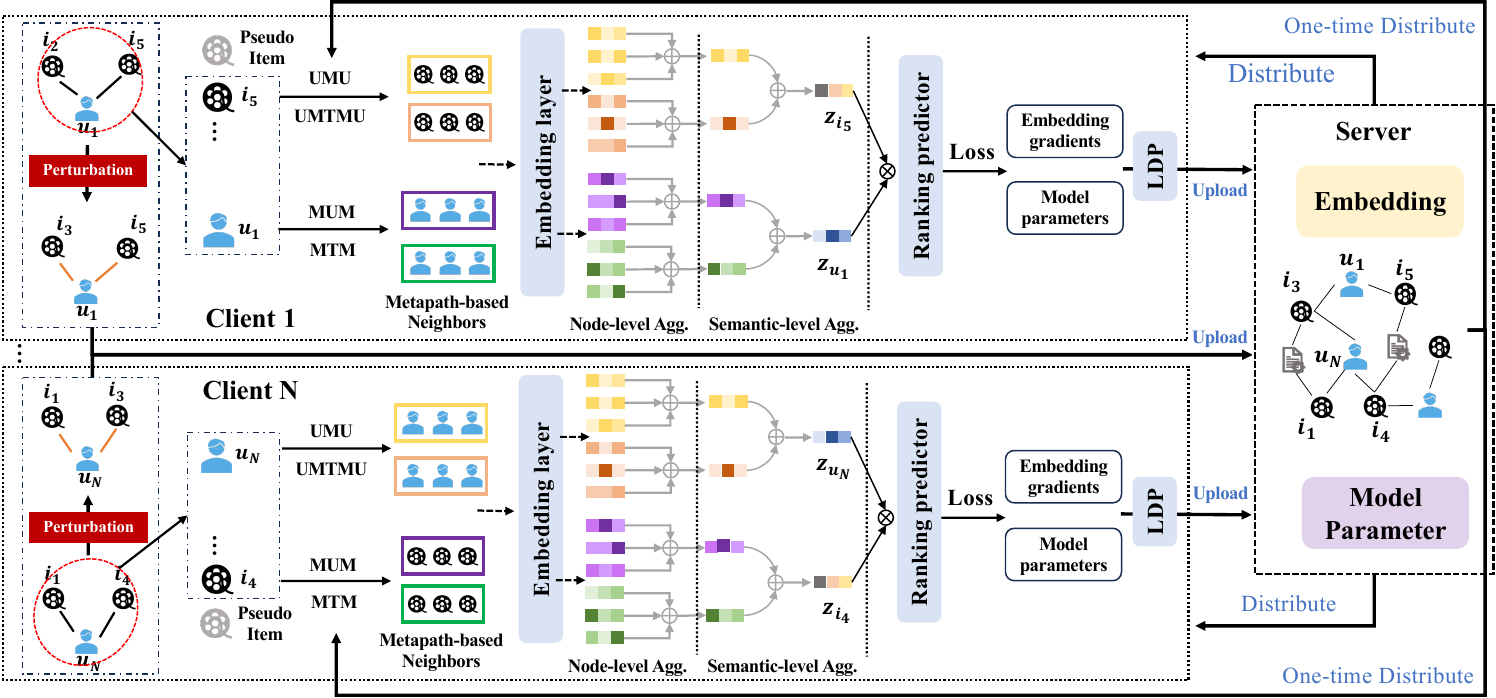}
    \caption{The overall framework of FedHGNN}
    \label{fig:overall}
\end{figure*}

\subsection{Task Formulation}
Based on the above preliminaries, we define our task as follows:
% In this section, we formally define some significant concepts related to our work as follows:
\begin{definition}
    \textbf{HIN-based FedRec}. Given user-level private HINs $\mathcal{G}_p = \{G_p^{u_1}, G_p^{u_2},..., G_p^{u_{|U|}}\}$ and shared HINs $\mathcal{G}_s= \{G_s^1, G_s^2,...,G_s^m\}$. The $G_p^{u_i}$ corresponding to the user $u_i \in U$ is stored in the $i$-th client, while $\mathcal{G}_s$ is stored in the server. We aim to collaboratively train a global model based on these distributed HINs with satisfying $\epsilon$-semantic privacy and $\epsilon$-semantic guided interaction privacy, which can recommend a ranked list of interested items for each user $u \in U$.
\end{definition}

\section{Methodology}
In this section, we give a detailed introduction to the proposed model FedHGNN. We first give an overview of FedHGNN. Then we present two main modules of FedHGNN, the semantic-preserving user-item interaction publishing and heterogeneous graph neural networks (HGNN) for recommendation. Finally, we give a privacy analysis of the proposed publishing process. 
% Finally we give a rigorous privacy analysis. 
% We first present heterogeneous graph neural network (HGNN) based learning to rank (LTR). The HGNN model learns node embeddings via node-level aggregations followed by semantic-level aggregations. Then we present our semantic-preserving perturbation mechanism which obtains meta-path based neighbors via a privacy-preserving manner. Finally we give a rigorous privacy analysis. 
% The mechanism consists of two stages. In the first stage, the user-related shared knowledge is perturbed based on EM, meanwhile the user interaction patterns (e.g., the user's favorite item categories) are preserved. In the second stage, the user-item interactions are perturbed within the selected user-related knowledge and in a degree-preserving manner, so as to minimize the utility losses.

\subsection{Overview of FedHGNN}
Different from existing FedRec systems that only utilize user-item interactions, FedHGNN also incorporates HINs into user and item modeling, which can largely alleviate the cold-start issue caused by data sparsity. Besides, as a core component of FedHGNN, the semantic-preserving user-item publishing mechanism recovers semantics with rigorous privacy guarantees, which can be applied to all meta-path based FedRec systems technically. We present the overall framework of FedHGNN in Figure \ref{fig:overall}. As can be seen, it mainly includes two steps, i.e., user-item interaction publishing and HGNN-based federated training. At the user-item interaction publishing step, each client perturbs local interactions using our two-stage perturbation mechanism and then uploads the perturbed results to the server. After the server receives local interactions from all clients, it can form an integral perturbed HIN, which is then distributed to each client to recover the meta-path based semantics. Note that the publishing step is only conducted once in the whole federated training process. At the federated training step, clients collaboratively train a global recommendation model based on recovered neighbors, which performs node-level neighbor aggregations followed by semantic-level aggregations. Then a ranking loss is adapted to optimize embedding and model parameters. At each communication round, each participating client locally trains the model and uploads the embedding and model gradients to the server for aggregations. To further protect privacy when uploading gradients, we apply local differential privacy (LDP) to the uploaded gradients. Besides, following previous work \cite{DBLP:journals/tist/LiuYFPY22,wu2022federated}, we also utilize pseudo interacted items during local training. 
%We summarize the algorithm of FedHGNN in Appendix \ref{app:alg}.

% which samples some non-interacted items as positive samples when optimizing Eq. (\ref{eq:loss}).

\subsection{Semantic-preserving User-item Interactions Publishing}
To recover the semantics of the centralized HIN (obtaining the meta-path based neighbors), directly uploading the adjacency list $a^u$ to the server can not satisfy the privacy definition because the user-item interactions are exposed. To address this, we first present a naive solution based on random response (RR) \cite{DBLP:journals/fttcs/DworkR14} and illustrate its defects of direct applications to our task. Then we give detailed introductions of our proposed two-stage perturbation mechanism for user-item interaction publishing. As depicted in Figure \ref{fig:perturbation}, it first perturbs the user-related shared HINs and then perturbs the user-item interactions within selected shared HINs, which not only achieves semantic-preserving but also satisfies the defined privacy.

\noindent \textbf{Random response (RR)}. As many homogeneous graph metrics publishing \cite{DBLP:conf/ccs/QinYYKX017, DBLP:journals/corr/abs-2202-10209, DBLP:journals/tkde/YeHAMX22, DBLP:conf/aaai/WangCWP0Y17}, a straw-man approach is directly utilizing RR \cite{DBLP:journals/fttcs/DworkR14} to perturb each user's adjacency list $a^u$, i.e., the user flips each bit of $a^u$ with probability $p = \frac{1}{1+e^{\epsilon}}$. However, this naive strategy faces both privacy and utility limitations. For privacy, although it satisfies the $\epsilon$-semantic guided interaction privacy, it can not achieve our $\epsilon$-semantic privacy goal. As for utility, it has been theoretically proved that RR would make a graph denser \cite{DBLP:conf/ccs/QinYYKX017}. Unfortunately, there exists perturbation enlargement phenomenon \cite{DBLP:conf/aaai/ZhangWZSZZ22} in the HGNNs, i.e., introducing more edges may harm the HGNN's performance, which is also confirmed in our latter experiments. Besides, RR fails to accommodate the semantic-preserving since it perturbs all bits of $a^u$. We can only perturb the semantic guided item set to preserve semantics but this will expose the user's high-order patterns. Furthermore, the denser graph largely hinders the training speed and compounds the communication overhead in the federated setting. 

\begin{figure}[t]
    \centering
    \includegraphics[scale=.55]{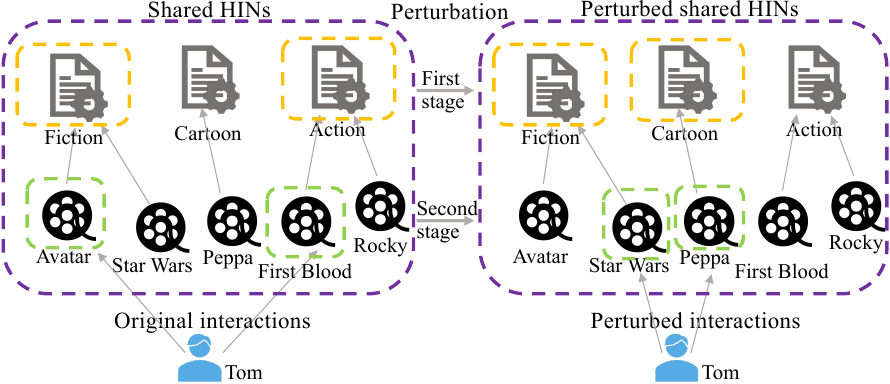}
    \caption{The two-stage perturbation mechanism for user-item interaction publishing}
    \label{fig:perturbation}
\end{figure}

% For simplicity, we assume 1) each item only belongs to one shared HIN and 2) a shared HIN only includes a category node and multiple item nodes that belong to this category. As is shown in Figure \ref{fig:hin}, a shared HIN contains a type node and multiple movie nodes belongs to this type. This simplification is feasible since we can achieve it by clustering items based on complex shared knowledge.

% For simplicity, we make a assumption that each item only belongs to one shared HIN. This simplification is feasible since we can achieve it by clustering items based on complex shared knowledge. 
\noindent \textbf{User-related shared HIN perturbation}. Based on the above analysis, we propose a two-stage perturbation mechanism. The first stage performs user-related shared HIN perturbation, which utilizes EM to select shared HINs for publishing. Intuitively, the true user-related shared HIN should be selected with a high probability. Therefore, according to the theory of EM, for a user $u$ with related shared HIN set $g$, we design the utility of selecting a shared HIN as follows:

\begin{equation}
\begin{aligned}
q(g, u, G_s) 
&= sim(G_s, \mathcal{G}_s^u)\\
&= \max_{G^{'}_s \in \mathcal{G}_s^u} \{\frac{1}{2}(cos(e_{G_s}, e_{G^{'}_s})+1\},
\end{aligned}
\label{eq:quality}
\end{equation}
where $\mathcal{G}_s^u$ is the shared HIN set of $u$ and $G_s \in \mathcal{G}_s^u$ is the selected shared HIN. $e_{G_s}$ is the representation of $G_s$ which is the average of related items' embeddings. Eq. (\ref{eq:quality}) indicates that if a shared HIN $G_s$ is more similar to user-related shared HIN set $\mathcal{G}_s^u$, it should be selected with a high probability. In this regard, the similarity function has multiple choices. We choose the highest cosine similarity score among $\mathcal{G}_s^u$ as the similarity function mainly in consideration of achieving a smaller sensitivity to obtain higher utility. In this way, the sensitivity $\Delta q$ is:
\begin{equation}
    \Delta q = \max\limits_{G_s} \max \limits_{g \sim \hat{g}} |q(g,u, G_s)-q(\hat{g},u, G_s)| = 1, 
\end{equation}
where $g \sim \hat{g}$ denotes that $g$ and $\hat{g}$ only differ in one bit. Then according to the EM, a shared HIN $G_s$ is selected with probability:

\begin{equation}
    \text{Pr}(G_s) = \frac{\exp(\epsilon q(g,u,G_s)/(2 \Delta q))}{\sum_{G_s^{\prime} \subset \mathcal{G}_s} \exp(\epsilon q(g,u,G_s^{\prime})/(2 \Delta q))}.
\end{equation}
The above selection process is repeated $|\mathcal{G}^u_s|$ times without replacement to ensure diversity. Then we can obtain the perturbed user's shared HIN list $\tilde{g^u}$. By this mechanism, the user's high-order patterns are maximum preserved since we select similar shared knowledge with high probability.

\noindent \textbf{User-item interaction perturbation}. After obtaining the perturbed $\tilde{g^u}$ (w.r.t. $\tilde{\mathcal{G}_s^u}$), we can extract a semantic guided item set $I^u_s$. The user-item interaction perturbation is conducted within the $I^u_s$ rather than the whole item set. Since our $\epsilon$-semantic guided interaction privacy is defined within the $I^u_s$, ignoring the items outside of $I^u_s$ has no effect on privacy guarantees. Besides, it also avoids introducing more irrelevant items and reduces the communication cost. In light of the user-related shared HIN having been perturbed in the first stage, we can directly apply RR to perturb $I^u_s$. However, in HIN-based recommendations, the size of $I^u_s$ is still large due to the relatively small number of shared HINs, thus introducing more irrelevant items. 
% For example, assuming there are $m$ items and $n$ categories and each category contains the same number of items, then the size of $I^u_s$ is at least $\frac{m}{n}$. Considering the number of categories is largely less than the number of items, $\frac{m}{n}$ is still large. 

Inspired by \cite{DBLP:journals/corr/abs-2202-10209}, we propose a user-item interaction perturbation mechanism, which performs degree-preserving RR (DPRR) \cite{DBLP:journals/corr/abs-2202-10209} on each of semantic guided item set. Specifically, given user $u$ and $\tilde{\mathcal{G}^u_s}$, we can split semantic guided item set $I^u_s$ into $|\tilde{\mathcal{G}^u_s}|$ subsets. For each subset $I^u_{s_i}$, we use a adjacency list $a^u_{s_i}=(a^u_{s_i1},\ldots,a^u_{s_i|I^u_{s_i}|}) \in \{0,1\}^{|I^u_{s_i}|}$ to denote the user-item interactions. DPRR perturbs each bit of $a^u_{s_i}$ by first applying RR and then with probability $q^u_{s_i}$ to keep a result of 1 (a user-item interaction) unchanged. Thus the probability of each bit being perturbed to 1 is:

% \begin{equation}
%     \text{Pr}(\tilde{a^u_{ij}}=1) = \left\{ 
%     \begin{array}{l}
%     (1-p)q^u_{i}  \:\:\:\:\:\:\:\:\:\:\:\:\:\:\: (\text{if} \:\: a^u_{ij} =1) \\
%     pq^u_{i}  \:\:\:\:\:\:\:\:\:\:\:\:\:\:\:\:\:\:\:\:\:\:\:\:\:\: (\text{if} \:\: a^u_{ij} =0).
%     \end{array}
% \right.
% \end{equation}
\begin{equation}
    \text{Pr}(\tilde{a}^u_{s_ij}=1) = \left\{ 
    \begin{aligned}
    &(1-p)q^u_{s_i}  && (\text{if} \:\: a^u_{s_ij} =1)\cr
    &pq^u_{s_i}  &&  (\text{if} \:\: a^u_{s_ij} =0).\cr
    \end{aligned}
\right.
\end{equation}
Assuming the true degree of user $u$ within the subset $I^u_{s_i}$ is $d^u_{s_i}$ (i.e., the number of 1 in $a^u_{s_i}$), according to the degree preservation property \cite{DBLP:journals/corr/abs-2202-10209}, the $q^u_{s_i}$ should be set as follows:
\begin{equation}
    q^u_{s_i} = \frac{d^u_{s_i}}{d^u_{s_i}(1-2p)+|I^u_{s_i}|p}.
\label{eq:q}
\end{equation}
In practice, the $q^u_{s_i}$ will be further clipped to $[0,1]$ to form probability. Note that the subset $I^u_{s_i}$ may not contain user-item interactions due to the perturbation on the shared HINs, in which case $q^u_{s_i}$=0. That is, we abandon a part of the interacted items, leading to semantic losses. Instead of that, we randomly select some items within $I^u_{s_i}$ so that the total degree is equal to the true degree $d^u$. We argue that in this way the semantics of user-item interactions are preserved in light of our shared HIN selection mechanism. 

% Our HGNN model for recommendation is designed mainly based on the architecture of HAN \cite{DBLP:conf/www/WangJSWYCY19}. 

\subsection{Heterogeneous Graph Neural Networks for Recommendation} 
Given a recovered meta-path, our HGNN first utilizes node-level attention to learn the weights of different neighbors under the meta-path. Then the weighted aggregated embeddings are fed into a semantic-level attention to aggregate embeddings under different meta-paths. Following this process, we give an illustration of obtaining user embeddings, and item embeddings are the same. 

\noindent \textbf{Node-level aggregation}. Let $h_{u_i}$ denotes the raw feature of a user $u_i$. Giving a meta-path $\rho_k$ and the recovered meta-path based neighbors $\mathcal{N}_{u_i}^{\rho_k}$ of $u_i$, the HGNN learns the weights of different neighbors via self-attention \cite{DBLP:conf/nips/VaswaniSPUJGKP17} followed by a softmax normalization layer:

%gcn的话没用multi-head
%参考dgl手册graphconv和han公式
% \begin{equation}
%     \alpha_{u_i u_j}^{\rho} = \frac{\exp(\sigma(\text{a}^T_{\rho} \cdot [\textbf{W}_{\rho} \cdot h_{u_i}||\textbf{W}_{\rho} \cdot h_{u_j}]))}{\sum_{u_k \in \mathcal{N}_{u_i}^{\rho}}\exp(\sigma(\text{a}^T_{\rho} \cdot [\textbf{W}_{\rho} \cdot h_{u_i}||\textbf{W}_{\rho} \cdot h_{u_k}]))}.
% \end{equation}

\begin{equation}
    \alpha_{u_i u_j}^{\rho_k} = \text{softmax}_{u_j \in \mathcal{N}_{u_i}^{\rho_k}}(\sigma(\textbf{a}^T_{\rho_k} \cdot [\textbf{W}_{\rho_k} \cdot h_{u_i}||\textbf{W}_{\rho_k} \cdot h_{u_j}])),
\end{equation}
where $\textbf{W}_{\rho_k}$ and $\textbf{a}_{\rho_k}$ are the meta-path-specific learnable parameters. Note that $\mathcal{N}_{u_i}^{\rho_k}$ only keeps the user neighbors along with the meta-path. After obtaining the attention weights, the model performs node-level aggregations to get the meta-path based user embeddings:

\begin{equation}
   z^{\rho_k}_{u_i} = \sigma(\sum_{u_j \in \mathcal{N}_{u_i}^{\rho_k}} \alpha^{\rho_k}_{u_i u_j} \cdot h_{u_j}).
\end{equation}
Since the neighbors are all in the meta-path $\rho_k$, the semantics of $\rho_k$ are fused into the user's embeddings. Thus given the meta-path set $\mathcal{P} = \{\rho_1,\ldots,\rho_m \}$, we can obtain $m$ meta-path based embeddings $\{z^{\rho_1}_{u_i},\ldots, z^{\rho_m}_{u_i}\}$ of $u_i$.

% \begin{equation}
%    z^{\rho}_{u_i} = \sigma(\sum_{u_k \in \mathcal{N}_{u_i}^{\rho}} \frac{1}{\sqrt{|\mathcal{N}_{u_i}^{\rho}|}\sqrt{|\mathcal{N}_{u_j}^{\rho}|}}\textbf{W}^{\rho}\cdot h_{u_j}+\textbf{b}^{\rho}).
% \end{equation}

% \begin{equation}
%    \alpha_{u_i u_j}^{\rho} = \frac{\exp(\sigma(\text{a}^T_{\rho} \cdot [h_{u_i}||h_{u_j}]))}{\sum_{u_k \in \mathcal{N}_{u_i}^{\rho}}\exp(\sigma(\text{a}^T_{\rho} \cdot [h_{u_i}||h_{u_k}]))} 
% \end{equation}

\noindent \textbf{Semantic-level aggregation}. User embedding with the specific meta-path only contains a single semantic (e.g., U-M-U). After we obtain user embeddings from different meta-paths, an attention-based semantic-level aggregation is conducted to fuse different semantics. Specifically, The importance (attention weights) of specific meta-path $\rho_k$ is explained as averaging all corresponding transformed user embeddings, which is learned as follows:

\begin{equation}
\beta^{\rho_k} = \text{softmax}_{\rho_k \in \mathcal{P}}(\frac{1}{|\mathcal{U}|}\sum_{u_i \in \mathcal{U}}\textbf{q}^{\text{T}} \cdot \tanh(\textbf{W} \cdot z_{u_i}^{\rho_k}+\textbf{b})),
\end{equation}
where $\textbf{W}$ and $\textbf{q}$ are the semantic-level parameters that are shared for all meta-paths and $\textbf{b}$ is the bias vector. Then we perform semantic-level aggregations based on learned attention weights to obtain the final user embedding $z_{u_i}$: 

\begin{equation}
   z_{u_i} = \sum_{\rho_k \in \mathcal{P}} \beta^{\rho_k} \cdot z^{\rho_k}_{u_i}.
\end{equation}

\noindent \textbf{Ranking loss}. Through the above process, we can obtain the final individual user embedding $z_{u_i}$ and item embedding $z_{v_j}$ respectively. The ranking score is defined as the inner product of them: $\hat{y}_{u_i v_j} = z_{u_i}^\text{T} z_{v_j}$. Then, a typical Bayesian Personalized Ranking (BPR) loss function \cite{DBLP:conf/uai/RendleFGS09} is applied to optimize the parameters:

\begin{equation}
\mathcal{L}_{u_i} = -\sum_{v_j \in I^{u_i}}\sum_{v_k \notin I^{u_i} } \text{ln}\sigma(\hat{y}_{u_iv_j}-\hat{y}_{u_ju_k}),
\label{eq:loss}
\end{equation}
where $I^{u_i}$ is the positive items set and $v_k \notin I^{u_i}$ is the negative item which is uniformly sampled.

\subsection{Privacy Analysis}
In this section, we give an analysis of our proposed semantic-preserving user-item interactions publishing, which satisfies both $\epsilon_1$-semantic privacy and $\epsilon_2$-semantic guided interaction privacy.  

\begin{theorem}
    The semantic-preserving user-item interactions publishing mechanism achieves $\epsilon_1$-semantic privacy.
\end{theorem}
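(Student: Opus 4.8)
The plan is to reduce the claim to a statement about the first stage of the mechanism alone, invoke the privacy guarantee of the exponential mechanism, and then confront the subtlety introduced by sampling shared HINs without replacement. First I would observe that $\epsilon_1$-semantic privacy concerns only the output distribution over the \emph{shared HIN list}, and that the sole component reading the sensitive list $g$ is the first stage (user-related shared HIN perturbation via EM). The second stage, user-item interaction perturbation, acts on the already-selected set $\tilde{\mathcal{G}}^u_s$ and on degree information within it; it is a randomized map applied to the \emph{output} of stage one. Hence, by the post-processing immunity of differential privacy, it suffices to show that stage one, viewed as a map $g \mapsto \tilde{g^u}$, satisfies $\Pr[\mathcal{M}(g)=\tilde g]/\Pr[\mathcal{M}(\hat g)=\tilde g]\le e^{\epsilon_1}$ for all one-bit-neighboring $g,\hat g$.

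For a \emph{single} selection round I would carry out the textbook exponential-mechanism calculation using the sensitivity $\Delta q = 1$ established just after \eqref{eq:quality}. Writing the per-round probability as $\Pr_g(G_s)\propto \exp(\epsilon_1 q(g,u,G_s)/(2\Delta q))$, normalized over $\mathcal{G}_s$, the ratio of probabilities under $g$ and $\hat g$ factorizes into a numerator ratio and a reciprocal normalizer ratio. The numerator ratio is bounded by $\exp(\epsilon_1|q(g,u,G_s)-q(\hat g,u,G_s)|/2)\le\exp(\epsilon_1/2)$ by the sensitivity bound, and since each summand of the normalizer obeys the same term-wise bound, the normalizer ratio is likewise at most $\exp(\epsilon_1/2)$; multiplying yields $e^{\epsilon_1}$ for one draw.

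The hard part, and where the argument needs genuine care rather than a citation, is that the selection is repeated $|\mathcal{G}^u_s|$ times \emph{without replacement}, so the real output is an ordered sequence $(r_1,\ldots,r_k)$ of distinct HINs drawn from a renormalized distribution at each round. I would express the probability of a full sequence as the product of per-round conditional EM probabilities and bound the ratio of two such products for neighboring $g,\hat g$. Because a one-bit flip changes $\mathcal{G}^u_s$ by a single element, the utilities $q(\cdot,u,G_s)$ can shift across \emph{all} rounds simultaneously, so a naive product bound gives $\exp(\epsilon_1\sum_i|q(g,u,r_i)-q(\hat g,u,r_i)|/2)\le\exp(\epsilon_1 k/2)$ on the numerators and similarly on the denominators, i.e.\ only $k\epsilon_1$ overall. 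The delicate points I would have to resolve are therefore: (i) controlling the renormalized denominators term-by-term at each round; (ii) the fact that flipping a bit of $g$ may change its Hamming weight and hence the number $k$ of rounds, which alters the support of the output and must be handled either by fixing $k$ in the protocol or by conditioning on a fixed output length; and (iii) whether the honest guarantee is $\epsilon_1$ or $k\epsilon_1$, the resolution of which decides whether a per-round budget split of $\epsilon_1/k$ is needed to legitimately claim $\epsilon_1$-semantic privacy for the published list. I expect point (iii) to be the crux of the proof.
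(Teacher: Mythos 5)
Your plan follows the same route as the paper's proof: reduce to the first stage (the second stage acts only on the already-selected $\tilde{g^u}$ and is post-processing), then bound the ratio for the sequence of exponential-mechanism selections by a product of per-round ratios with $\Delta q = 1$. The paper resolves your point (iii) by construction rather than by argument: the mechanism is stipulated to run each of the $n=|\mathcal{G}^u_s|$ selections with budget $\epsilon_1/n$, so each factor is bounded by $e^{\epsilon_1/n}$ and the product over $n$ rounds gives exactly $e^{\epsilon_1}$. Your single-round calculation with the full budget $\epsilon_1$ would, as you anticipate, only yield $n\epsilon_1$ under composition; the per-round split is part of the protocol, not something to be derived, and once it is granted the rest is routine sequential composition.

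Your points (i) and (ii) identify genuine gaps that the paper's proof does not close. The proof asserts that ``each selection is independent'' and applies the standard EM bound to every factor, but with sampling without replacement the $i$-th factor is a conditional probability renormalized over the remaining HINs; the term-wise bound on the restricted normalizer does go through by the same argument as for the full sum, but this needs to be said. More seriously, flipping one bit of $g$ changes $|\mathcal{G}^u_s|$ by one, so the number of selections --- and hence the length and support of $\tilde{g^u}$ --- differs between $g$ and $\hat{g}$, making the likelihood ratio zero or infinite for some outputs. The paper implicitly holds $n$ fixed across the two neighboring lists; a clean repair is to make the number of selections data-independent or to condition on output length. So your skeleton matches the paper's, the crux you flag in (iii) is settled there by fiat, and the difficulties in (i) and (ii) are real omissions rather than misunderstandings on your part.
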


\begin{proof}
    Let $g^u$ and $\hat{g^u}$ denote any two user-related shared HIN lists which only differ in one bit, and any output $\tilde{g^u}$ after the first-stage perturbation (denoted as $\mathcal{M}^{skp}=\{\mathcal{M}^{skp}_1,\ldots,\mathcal{M}^{skp}_n\}$ w.r.t. $n$ selections). Assuming the total privacy budget is $\epsilon_1$ and each selection consumes $\frac{\epsilon_1}{n}$ privacy budget. Since each selection is independent, we have:
    \begin{equation*}
        \begin{aligned}
        \frac{\text{Pr}(\mathcal{M}^{skp}(g^u)=\tilde{g^u})}{\text{Pr}(\mathcal{M}^{skp}(\hat{g^u})=\tilde{g^u})}
        &= \frac{\Pi_{i=1}^n \text{Pr}(\mathcal{M}^{skp}_i(g^u, q, \mathcal{G}_s)=G_{s_i})}{\Pi_{i=1}^n \text{Pr}(\mathcal{M}^{skp}_i(\hat{g^u}, q, \mathcal{G}_s)=G_{s_i})}\\
        &=\Pi_{i=1}^n \frac{\text{Pr}(\mathcal{M}^{skp}_i(g^u, q, \mathcal{G}_s)=G_{s_i})}{\text{Pr}(\mathcal{M}^{skp}_i(\hat{g^u}, q, \mathcal{G}_s)=G_{s_i})} ,
        \end{aligned}
    \end{equation*}
    According to the EM, we have:
    \begin{equation*}
         \frac{\text{Pr}(\mathcal{M}^{skp}_i(g^u, q, \mathcal{G}_s)=G_{s_i})}{\text{Pr}(\mathcal{M}^{skp}_i(\hat{g^u}, q, \mathcal{G}_s)=G_{s_i})} \leq e^{\frac{\epsilon_1}{n}},
    \end{equation*}
    Thus
    \begin{equation*}
    \frac{\text{Pr}(\mathcal{M}^{skp}(g^u)=\tilde{g^u})}{\text{Pr}(\mathcal{M}^{skp}(\hat{g^u})=\tilde{g^u})} \leq \Pi_{i=1}^n e^{\frac{\epsilon_1}{n}}=e^{\epsilon_1}.
    \end{equation*}
    
\end{proof}

\begin{theorem}
The semantic-preserving user-item interactions publishing mechanism achieves $\epsilon_2$-semantic guided interaction privacy.
\end{theorem}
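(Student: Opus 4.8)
The plan is to establish $\epsilon_2$-semantic guided interaction privacy by analyzing the second-stage perturbation, which applies degree-preserving random response (DPRR) within each semantic guided subset $I^u_{s_i}$. Since the semantic guided adjacency list $a_s$ is partitioned into disjoint subsets $a^u_{s_i}$ across the selected shared HINs, and DPRR is applied independently to each bit, I would first reduce the claim to showing that the per-bit perturbation satisfies the privacy bound, then compose across bits and subsets. The key quantity is the likelihood ratio $\frac{\text{Pr}(\tilde{a}^u_{s_ij} = \tilde{v})}{\text{Pr}(\hat{a}^u_{s_ij} = \tilde{v})}$ for a single bit, evaluated over both possible outputs $\tilde{v} \in \{0,1\}$, where $a^u_{s_ij}$ and $\hat{a}^u_{s_ij}$ differ on exactly that bit.

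First I would write out the four relevant probabilities from the DPRR transition rule. Using the stated flip probability $p = \frac{1}{1+e^{\epsilon_2}}$ (so that $\frac{1-p}{p} = e^{\epsilon_2}$), the probability of outputting $1$ is $(1-p)q^u_{s_i}$ when the true bit is $1$ and $pq^u_{s_i}$ when it is $0$; the complementary probabilities give the output $0$ case. The central observation is that the degree-preserving factor $q^u_{s_i}$ appears as a common multiplier in the output-$1$ branch and therefore \emph{cancels} in the likelihood ratio, leaving $\frac{(1-p)q^u_{s_i}}{pq^u_{s_i}} = \frac{1-p}{p} = e^{\epsilon_2}$. For the output-$0$ branch I would check that the corresponding ratio $\frac{1-(1-p)q^u_{s_i}}{1-pq^u_{s_i}}$ (and its reciprocal) is also bounded by $e^{\epsilon_2}$; this is the step that needs care, since here $q^u_{s_i}$ does not cancel cleanly.

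The main obstacle is precisely this output-$0$ branch: verifying $e^{-\epsilon_2} \leq \frac{1-(1-p)q^u_{s_i}}{1-pq^u_{s_i}} \leq e^{\epsilon_2}$ for all admissible $q^u_{s_i} \in [0,1]$. I expect the argument to rely on the fact that the extreme value of this ratio over $q^u_{s_i} \in [0,1]$ is attained at an endpoint, and that at $q^u_{s_i}=1$ the ratio equals $\frac{p}{1-p} = e^{-\epsilon_2}$ while at $q^u_{s_i}=0$ it equals $1$, so the bound holds throughout the interval. (This is exactly why $q^u_{s_i}$ is clipped to $[0,1]$ in practice.) Granting the per-bit bound, I would then note that because the subsets $I^u_{s_i}$ are disjoint and an adjacency list $a_s$ and its neighbor $\hat{a_s}$ differ in a single bit, changing that one bit affects only one factor in the product over all bits, so the overall ratio $\frac{\text{Pr}(\mathcal{M}(a_s)=\tilde{a_s})}{\text{Pr}(\mathcal{M}(\hat{a_s})=\tilde{a_s})}$ reduces to the single differing-bit ratio and is therefore bounded by $e^{\epsilon_2}$, establishing $\epsilon_2$-semantic guided interaction privacy.
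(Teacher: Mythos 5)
Your proof is correct, but it takes a genuinely different route from the paper's. The paper factors DPRR into plain randomized response followed by a ``keep a $1$ with probability $q^u_{s_i}$'' step: it bounds the RR likelihood ratio on the single differing bit by $\frac{1-p}{p}=e^{\epsilon_2}$ and then dismisses the $q^u_{s_i}$-flip as post-processing, so it never has to examine the output-$0$ branch of the composite mechanism at all. You instead analyze the composite transition kernel directly: you observe that $q^u_{s_i}$ cancels in the output-$1$ branch, and for the output-$0$ branch you bound $\frac{1-(1-p)q^u_{s_i}}{1-pq^u_{s_i}}$ over $q^u_{s_i}\in[0,1]$ by monotonicity (the derivative in $q^u_{s_i}$ has the sign of $2p-1<0$, so the ratio decreases from $1$ at $q^u_{s_i}=0$ to $\frac{p}{1-p}=e^{-\epsilon_2}$ at $q^u_{s_i}=1$, and the reciprocal never exceeds $e^{\epsilon_2}$). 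Your version costs an extra calculation but is more self-contained, since it does not invoke the post-processing property, and it yields the two-sided per-bit bound explicitly. One caveat applies to both arguments and is actually more visible in yours: the reduction of the product over all bits to the single differing-bit factor (and the cancellation of $q^u_{s_i}$) assumes that $q^u_{s_i}$ is the same under $a_s$ and $\hat{a_s}$. Since Eq.~(\ref{eq:q}) computes $q^u_{s_i}$ from the \emph{true} degree $d^u_{s_i}$, which changes by one between neighboring lists when the differing bit lies in subset $i$, this assumption is not literally satisfied; the paper's ``post-processing'' step has the same hidden dependence on the private input. A fully rigorous treatment would either compute $q^u_{s_i}$ from a separately privatized degree or account for its sensitivity, but within the paper's own modeling of $q^u_{s_i}$ as a fixed parameter, your argument goes through and proves the stated bound.
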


\begin{proof}
    After the first-stage perturbation, we can obtain the semantic guided item set $I^u_s$ and semantic guided adjacency list $a^u_s$ based on $\tilde{g^u}$. Let $\hat{a^u_s}$ denotes any adjacency list that only differs one bit with $a^u_s$. Without loss of generality, we assume $a^u_{s1} \neq \hat{a^u_{s1}}$. The second-stage perturbation is equivalent to first applying RR and then flipping each bit of 1 with probability $1-q^u_{s_i}$. Denoting the RR perturbation as $\mathcal{M}^{RR}$, we have:
    \begin{equation*}
    \begin{aligned}
    \frac{\text{Pr}(\mathcal{M}^{RR}(a^u_s)=\tilde{a^u_s})}{\text{Pr}(\mathcal{M}^{RR}(\hat{a^u_s})=\tilde{a^u_s})}
    &=\frac{\text{Pr}(a^u_{s1} \rightarrow \tilde{a^u_{s1}})\ldots \text{Pr}(a^u_{s|I^u_s|} \rightarrow \tilde{a^u_{s|I^u_s|}})}{\text{Pr}(\hat{a^u_{s1}} \rightarrow \tilde{a^u_{s}})\ldots \text{Pr}(\hat{a^u_{s|I^u_s|}} \rightarrow \tilde{a^u_{s|I^u_s|}})} \\
    &=\frac{\text{Pr}(a^u_{s1} \rightarrow \tilde{a^u_{s1}})}{\text{Pr}(\hat{a^u_{s1}} \rightarrow \tilde{a^u_{s1}})}\leq \frac{1-p}{p} \\
    &= e^{\epsilon_2}.
    \end{aligned}
    \end{equation*}

The subsequent flipping operation can be viewed as post-processing on the $\tilde{a^u_s}$, thus the whole perturbation also achieving $\epsilon_2$-semantic guided interaction privacy.
\end{proof}

\section{Experiments}
\subsection{Experimental Setup}

\textbf{Datasets}. We employ four real HIN datasets, including two academic datasets (ACM and DBLP) and two E-commerce datasets (Yelp and Douban Book), where the basic information is summarized in Table~\ref{tab:data}. The \textit{user} nodes and the \textit{private link types} are marked in bold. For ACM and DBLP, the \textit{item} nodes means authors.

% in each client and the private link types are marked in bold. For example, there are 4025 clients in ACM dataset and the private link type is paper-author. 

% \textbf{Datasets}. To evaluate the performance of the proposed method, we employ three real HIN datasets, including two citation datasets (ACM and DBLP) and one E-commerce dataset (Yelp). The individual nodes in each client and the private link types are marked in bold. For example, there are 4025 clients in ACM dataset and the private link type is paper-author. The detailed statistics of these datasets are shown in Appendix \ref{app:datasets}.

\begin{table}[h]
% \vspace{-2mm}
%\renewcommand{\arraystretch}{1.8} %控制行高
  \centering
  \fontsize{8.5}{8.5}\selectfont
  \caption{Dataset statistics.}
  % \resizebox{\columnwidth}{!}{
    \begin{tabular}{c|c|c|c}
    \hline
    \textbf{Dataset} & \textbf{\# Nodes} & \textbf{\makecell*[c]{\# Private/Shared \\Links}} &\textbf{Meta-paths}
    \\
    \hline
    \hline
    %注意这里p-a的数量应该是全部数量，不应该是只包含训练集的数量
    ACM & \makecell*[c]{\textbf{Paper (P)}: 4025\\Author (A): 17431\\Conference (C): 14} & \makecell*[c]{\textbf{P-A}: 13407\\P-C: 4025\\} & \makecell*[c]{P-A-P \\P-C-P\\ A-P-A}  \\ \hline
    DBLP & \makecell*[c]{\textbf{Paper (P)}: 14328\\ Author (A): 4057\\Conference (C): 20} & \makecell*[c]{\textbf{P-A}: 19645\\ P-C: 14328\\} & \makecell*[c]{P-A-P\\P-C-P\\A-P-A} \\ \hline
    Yelp & \makecell*[c]{\textbf{User (U)}: 8743\\ Business (B): 3985\\Category (C): 511} & \makecell*[c]{\textbf{U-B}: 14896\\B-C: 11853\\} & \makecell*[c]{U-B-U\\U-B-C-B-U\\B-U-B}  \\ \hline
    \makecell*[c]{Douban \\Book} & \makecell*[c]{\textbf{User (U)}: 6793\\ Book (B): 8322\\Group (G): 2936\\ Author (A): 10801} & \makecell*[c]{\textbf{U-B}: 21179\\U-G: 664847 \\B-A: 8171} & \makecell*[c]{U-B-U\\U-G-U\\B-U-B\\B-A-B}  \\ \hline
    \end{tabular}%
    % }
\label{tab:data}%
\end{table}%

\begin{table*}[ht]
  \renewcommand\arraystretch{1.1}
  \centering
  \caption{Overall performance of different methods on Four datasets. The best result is in bold.}
  \resizebox{2.1\columnwidth}{!}{
    \begin{tabular}{ccccccccccccccc}
    \hline
    \multicolumn{2}{c||}{Model} & \multicolumn{1}{m{0.8cm}<{\centering}|}{HERec} & \multicolumn{1}{c|}{HAN} & \multicolumn{1}{c|}{NGCF} & \multicolumn{1}{m{1.2cm}<{\centering}|}{lightGCN} & \multicolumn{1}{c|}{RGCN} & \multicolumn{1}{m{0.8cm}<{\centering}}{HGT} & \multicolumn{1}{||c}{FedMF} & \multicolumn{1}{|m{1cm}<{\centering}|}{FedGNN} & \multicolumn{1}{m{0.8cm}<{\centering}|}{FedSog} & \multicolumn{1}{m{1.2cm}<{\centering}|}{PerFedRec}& \multicolumn{1}{m{1cm}<{\centering}|}{PFedRec}& \multicolumn{1}{m{1.4cm}<{\centering}}{SemiDFEGL} & \multicolumn{1}{||m{1.2cm}<{\centering}}{FedHGNN} \\
    \hline
    \hline
    \multirow{4}{*}{\begin{sideways}ACM\end{sideways}} & \multicolumn{1}{||l||}{H@5} & 0.3874 &\textbf{0.4152} &0.3845  & 0.3684 &    0.2929   & 0.3834 & \multicolumn{1}{||c}{0.0834} & 0.2608 & 0.2905 & 0.2516 & 0.2733 & 0.2065& \multicolumn{1}{||c}{\textbf{0.3593}} \\
\cline{2-15}          
& \multicolumn{1}{||l||}{H@10} & 0.4525 & 0.4727 & 0.4379& 0.4737 &   0.4619    & \textbf{0.5035} & \multicolumn{1}{||c}{0.1331} & 0.345 & 0.3642 & 0.3229 &0.3533&0.3083&  \multicolumn{1}{||c}{\textbf{0.4185}} \\
\cline{2-15}          
& \multicolumn{1}{||l||}{N@5} & 0.3222 & \textbf{0.335} &0.322& 0.2624 &   0.1752    & 0.2612 & \multicolumn{1}{||c}{0.056} & 0.193 & 0.2201 & 0.1824 &0.1982&0.1384& \multicolumn{1}{||c}{\textbf{0.2787}} \\
\cline{2-15}          
& \multicolumn{1}{||l||}{N@10} & 0.3333 &\textbf{0.3537}& 0.3393 & 0.2968 &    0.2302   &  0.3001& \multicolumn{1}{||c}{0.072} & 0.2202 & 0.2438 & 0.2055 &0.224&0.171& \multicolumn{1}{||c}{\textbf{0.298}} \\
    \hline
\multirow{4}{*}{\begin{sideways}DBLP\end{sideways}} & \multicolumn{1}{||l||}{H@5} & 0.3265 &\textbf{0.3877}&   0.3161    & 0.3256 &   0.387    &  0.3252& \multicolumn{1}{||c}{0.0998} & 0.2301 & 0.1978 & 0.1676 & 0.2843&0.1442&\multicolumn{1}{||c}{\textbf{0.3376}} \\
\cline{2-15}          
& \multicolumn{1}{||l||}{H@10} & 0.3882 &0.4498&  0.3895  & 0.4419 &    \textbf{0.5074}   & 0.4763 & \multicolumn{1}{||c}{0.1606} & 0.3252 & 0.2691 & 0.2619 &0.3863&0.2518& \multicolumn{1}{||c}{\textbf{0.4373}} \\
\cline{2-15}          
& \multicolumn{1}{||l||}{N@5} & 0.2586 &\textbf{0.33}&0.246    & 0.2281 &   0.2763    & 0.2264 & \multicolumn{1}{||c}{0.0603} & 0.167 & 0.14  & 0.105 &0.2005&0.091& \multicolumn{1}{||c}{\textbf{0.2481}} \\
\cline{2-15}          
& \multicolumn{1}{||l||}{N@10} & 0.2717 &\textbf{0.3503}&  0.27  & 0.2646 &   0.3151    & 0.2748 & \multicolumn{1}{||c}{0.0732} & 0.1963 & 0.163 & 0.1352 &0.2332&0.1253& \multicolumn{1}{||c}{\textbf{0.2778}} \\
\hline

\multirow{4}{*}{\begin{sideways}Yelp\end{sideways}} & \multicolumn{1}{||l||}{H@5} & 0.2322 &0.2877&  0.1831 & 0.2368 &    0.2844   & \textbf{0.3322} & \multicolumn{1}{||c}{0.0712} & 0.1801 & 0.1839 &0.1513 &0.1572&0.1903&\multicolumn{1}{||c}{\textbf{0.2178}} \\
\cline{2-15}          
& \multicolumn{1}{||l||}{H@10} & 0.3322 &  0.4077  &0.2958& 0.3684 &   0.3907    & \textbf{0.4635} & \multicolumn{1}{||c}{0.1259} & 0.2596 & 0.2715 & 0.237 &0.2249&0.28& \multicolumn{1}{||c}{\textbf{0.2977}} \\
\cline{2-15}          
& \multicolumn{1}{||l||}{N@5} & 0.1637 &  0.1929     &0.1127& 0.1881 &   0.2003    & \textbf{0.2311} &
\multicolumn{1}{||c}{0.0444} & 0.1221 & 0.1227 & 0.1002&0.1077&0.121 & \multicolumn{1}{||c}{\textbf{0.1578}} \\
\cline{2-15}          
& \multicolumn{1}{||l||}{N@10}& 0.1961&   0.2316  & 0.1493 & 0.2307   & 0.2346  &\textbf{0.2733} &
\multicolumn{1}{||c}{0.0619} & 0.1477 & 0.1508 & 0.1277 &0.1294&0.1497& \multicolumn{1}{||c}{\textbf{0.1834}} \\
\hline

\multirow{4}{*}{\begin{sideways}Db Book\end{sideways}} & \multicolumn{1}{||l||}{H@5} & 0.248 &0.2488&  0.1572 & 0.2927 &    0.3321   & \textbf{0.3431} & \multicolumn{1}{||c}{0.0859} & 0.1528 & 0.2505 &0.2039 &0.2842&0.0911&\multicolumn{1}{||c}{\textbf{0.3213}} \\
\cline{2-15}          
& \multicolumn{1}{||l||}{H@10} & 0.323 &  0.3602  &0.2331& 0.3902 &   0.4819    & \textbf{0.4973} & \multicolumn{1}{||c}{0.1411} & 0.22273 & 0.3464 & 0.2727 &0.3638&0.1367& \multicolumn{1}{||c}{\textbf{0.438}} \\
\cline{2-15}          
& \multicolumn{1}{||l||}{N@5} & 0.1767 &  0.1704     &0.1067& 0.2198 &   0.2239    & \textbf{0.2307} &
\multicolumn{1}{||c}{0.0529} & 0.1017 & 0.1743 & 0.1435&0.2037&0.0636 & \multicolumn{1}{||c}{\textbf{0.2135}} \\
\cline{2-15}          
& \multicolumn{1}{||l||}{N@10}& 0.2011&   0.2084  & 0.1289 & 0.2518   & 0.2671  &\textbf{0.2802} &
\multicolumn{1}{||c}{0.066} & 0.1257 & 0.2053 & 0.1658 &0.2295&0.0804& \multicolumn{1}{||c}{\textbf{0.2511}} \\
    \hline
\end{tabular}%
}
  \label{tab:overall}%
\end{table*}%

\textbf{Baselines}. Following \cite{wu2022federated}, we compare FedHGNN with two kinds of baselines: recommendation model based on centralized data-storage (including HERec \cite{DBLP:journals/tkde/ShiHZY19}, HAN \cite{DBLP:conf/www/WangJSWYCY19}, NGCF \cite{DBLP:conf/sigir/Wang0WFC19}, lightGCN \cite{DBLP:conf/sigir/0001DWLZ020}, RGCN \cite{DBLP:conf/esws/SchlichtkrullKB18}, HGT \cite{DBLP:conf/www/HuDWS20}) and federated setting for privacy-preserving (including FedMF \cite{DBLP:journals/expert/ChaiWCY21}, FedGNN \cite{wu2022federated}, FedSog \cite{DBLP:journals/tist/LiuYFPY22}, PerFedRec \cite{DBLP:conf/cikm/LuoXS22}, PFedRec\cite{DBLP:conf/ijcai/ZhangL0YZZY23}, SemiDFEGL\cite{DBLP:conf/www/QuTZNHSY23}). The details of them are shown in Appendix \ref{app:baselines}.
%Details of these methods are as follows:

\textbf{Implementation Details}. For all the baselines, the node features are randomly initialized and the hidden dimension is set to 64. We tune other hyper-parameters to report the best performance. We keep the available heterogeneous information (e.g., meta-paths) the same for all HIN-based methods. We modify the loss function to be the BPR loss as the same with ours. In FedHGNN, the learning rate is set as 0.01, $\epsilon_1$ and $\epsilon_2$ are all set as 1. For each dataset, we first perform item clustering based on shared knowledge so that each item only belongs to one shared HIN. The number of shared HIN (number of clustering) is set as 20 for all datasets. The batch size (the number of participated clients in each round) is set as 32. For LDP and pseudo-interacted items, we set the hyper-parameters as the same with \cite{DBLP:journals/tist/LiuYFPY22}. Following \cite{DBLP:conf/cikm/LuoXS22}, we apply the leave-one-out strategy for evaluation and use HR@K and NDCG@K as metrics. We will also provide an implementation based on GammaGL \cite{DBLP:conf/sigir/LiuYZHZWZHWS23}.

\subsection{Overall Performance}
Table \ref{tab:overall} shows the overall results of all baselines on four datasets. The following findings entail from Table \ref{tab:overall}: (1) FedHGNN outperforms all the FedRec models by a big margin (up to 34\% in HR@10 and 42\% in NDCG@10), which demonstrates the effectiveness of our model.  Surprisingly, FedHGNN also outperforms several centralized models (notably non-HIN based methods, e.g., NGCF), indicating the significance of utilizing rich semantics of HIN in FedRec. We also assume the perturbation can be seen as an effective data augmentation to alleviate cold-start issues. Since we find the interactions of some inactive users slightly increase after perturbation.  (2) Among centralized baselines, HIN-based methods perform better, especially on sparse datasets (e.g., DBLP), owing to introducing additional semantic information to alleviate cold-start issues. It has also been observed that GNN-based methods (HAN, RGCN, and HGT) achieve better results than non-GNN based methods (HERec), indicating that GNNs are more potent in capturing semantic information. (3) Among federated baselines, FedMF performs poorly because it ignores the high-order interactions which are significant for cold-start recommendation. Other federated models (FedGNN, FedSog, PerFedRec, and SemiDFEGL) improve this by privacy-preserving graph expansion (FedSog assumes social relation is public). SemiDFEGL performs relatively poorly since it focuses more on parameter efficiency and reducing communication costs rather than utility. It's surprising to find that PFedRec outperforms FedMF by a large margin and even outperforms some GNN-based FedRec baselines. We discover that it’s attributed to the personal item embeddings and the same parameter initialization of score functions, which is an interesting finding and worth studying in future works. Compared to these methods, our FedHGNN further considers semantic information with theoretically guaranteed privacy protection.

\begin{table*}[ht]
  \renewcommand\arraystretch{1.1}
  \centering
  \caption{Performance of different variants of FedHGNN on three datasets.}
    \begin{tabular}{m{0.7cm}<{\centering}lccccccccc}
    \hline
    \multicolumn{2}{c||}{Model} & \multicolumn{1}{c|}{$\text{FedHGNN}^{*}$} & \multicolumn{1}{c|}{+RR} & \multicolumn{1}{c|}{+DPRR} & \multicolumn{1}{c|}{+SRR} &\multicolumn{1}{c|}{+SDPRR} &\multicolumn{1}{c|}{+E}&\multicolumn{1}{c|}{+ESRR}&\multicolumn{1}{c}{+$\text{ESDPRR}^{*}$}& \multicolumn{1}{||m{1.2cm}<{\centering}}{FedHGNN}  \\
    \hline
    \hline
    \multirow{4}{*}{ACM} & \multicolumn{1}{||l||}{HR@5} & 0.3118 &0.0495 &0.1749  & 0.3437 &    \textbf{0.389}&0.3461&0.2959 &0.3475 &\multicolumn{1}{||c}{0.3593}\\
\cline{2-11}          
& \multicolumn{1}{||l||}{HR@10} & 0.3961 & 0.1118 & 0.2268& 0.4998 &   \textbf{0.5027}&0.4004&0.3861 &0.4069 &\multicolumn{1}{||c}{0.4185}     \\
\cline{2-11}          
& \multicolumn{1}{||l||}{NDCG@5} & 0.2293 & 0.0345 &0.1326& 0.2312 &   \textbf{0.2865}&0.266&0.215 &0.266 &\multicolumn{1}{||c}{0.2787}     \\
\cline{2-11}          
& \multicolumn{1}{||l||}{NDCG@10} & 0.2567 &0.0491& 0.1492 & 0.2845 &    \textbf{0.323}&0.2835&0.2438 &0.2852 &\multicolumn{1}{||c}{0.298}    \\
    \hline
\multirow{4}{*}{DBLP} & \multicolumn{1}{||l||}{HR@5} & 0.2824 &0.0694&   0.1678    & 0.2224 &   0.3346&0.2616&0.2729 &0.3156 &\multicolumn{1}{||c}{\textbf{0.3376}}   \\
\cline{2-11}          
& \multicolumn{1}{||l||}{HR@10} & 0.3934 &0.1237&  0.2394  & 0.3154 &    \textbf{0.4557}&0.3701&0.3718 &0.4227 &\multicolumn{1}{||c}{0.4373}    \\
\cline{2-11}          
& \multicolumn{1}{||l||}{NDCG@5} & 0.2176 &0.0429&0.1115    & 0.1458 &   \textbf{0.2484}&0.1835&0.1929 &0.2273 &\multicolumn{1}{||c}{0.2481}     \\
\cline{2-11}          
& \multicolumn{1}{||l||}{NDCG@10} & 0.241 &0.0602&  0.1413  & 0.1757 &   \textbf{0.2801}&0.2176&0.2249 &0.2619 &\multicolumn{1}{||c}{0.2778}    \\
\hline
\multirow{4}{*}{Yelp} & \multicolumn{1}{||l||}{HR@5} & \textbf{0.2583} &0.0663&  0.1383 & 0.1172 &    0.2364&0.2244&0.223 &0.1871 &\multicolumn{1}{||c}{0.2178}   \\
\cline{2-11}          
& \multicolumn{1}{||l||}{HR@10} & \textbf{0.3482} &  0.1232  &0.2079& 0.1803 &   0.3245&0.3242&0.3257 &0.2624 &\multicolumn{1}{||c}{0.2977}     \\
\cline{2-11}          
& \multicolumn{1}{||l||}{NDCG@5} & \textbf{0.1859} &  0.0392     &0.0963& 0.0672 &   0.171&0.152&0.1538 &0.1321 &\multicolumn{1}{||c}{0.1578}   \\
\cline{2-11}          
& \multicolumn{1}{||l||}{NDCG@10}& \textbf{0.2201}&   0.0575  & 0.1185 & 0.0789   & 0.1976&0.1875&0.1804&0.1563 &\multicolumn{1}{||c}{0.1834}  \\
    \hline
    \end{tabular}%
  \label{tab:ablation}%
\end{table*}%

\subsection{Ablation Study}
\label{ab_study}

To have an in-depth analysis of our two-stage perturbation mechanism, we conduct ablation studies to dissect the effectiveness of different modules. We design 7 variants based on FedHGNN and the performance of these variants is outlined in Table \ref{tab:ablation}. $\textit{FedHGNN}^{*}$ is the FedHGNN model without two-stage perturbation. \textit{RR} means random response and $\textit{DPRR}$ is the degree-preserving RR. \textit{+S} indicates adding corresponding perturbation to each semantic guided adjacency list $a^u_{s_i}$, otherwise to each user's adjacency list $a^u$. Note that $\textit{SDPRR}^{*}$ indicates performing $\textit{DPRR}$ to the whole semantic guided adjacency list $a^u_s$, which is the only difference with our FedHGNN. \textit{+E} indicates adding EM perturbation. We set $\epsilon_2=\epsilon_2=1$ for all variants except that $\epsilon_2=6$ for RR-related variants, due to a smaller $\epsilon_2$ makes the graph denser, which sharply increases training time and consumed memory.

From the table, we have several findings: (1) The performance of FedHGNN is even superior to the model without perturbation in ACM and DBLP, and removing the first-stage perturbation ($\textit{+SDPRR}$) can achieve better results. In contrast, $\textit{+DPRR}$ obtains worse results, indicating that the main factor for improving the FedHGNN is the $\textit{DPRR}$ on each semantic guided adjacency list. Considering the relatively sparse datasets and a slight increment of some inactive user’s interactions after the $\textit{SDPRR}$, we conclude that $\textit{SDPRR}$ has the ability to tackle data sparsity in recommendations since it augments data in a semantic-preserving manner meanwhile keeping data diversity. (2) Pure $\textit{RR}$ and $\textit{DPRR}$ perform poorly since they perturb user-item interactions randomly without considering semantic preserving. Pure $\textit{RR}$ performs even worse due to it making a graph denser and causing perturbation enlargements \cite{DBLP:conf/aaai/ZhangWZSZZ22}. $\textit{DPRR}$ preserves degrees but fails to preserve user-item interaction patterns. Thus we can draw a conclusion that semantic-preserving requires both degree-preserving and feature-preserving. Perturbation within the semantic guided item set (\textit{+SRR} and \textit{+SDPRR}) performs much better, which further verifies our conclusion. (3) Adding first-stage perturbation will harm the performance but is necessary, otherwise we can't protect the user's high-order patterns. Thanks to our designed similarity-based EM, the user's high-order patterns are maximum preserved and the performance has not decreased dramatically. Note that FedHGNN also outperforms $\textit{+ESDPRR}^{*}$, indicating we should keep the diversity of user-item interactions after EM, i.e., the interacted items should exist in each selected shared HIN.

\begin{figure}[!t]
    \centering
    \includegraphics[scale=.1]{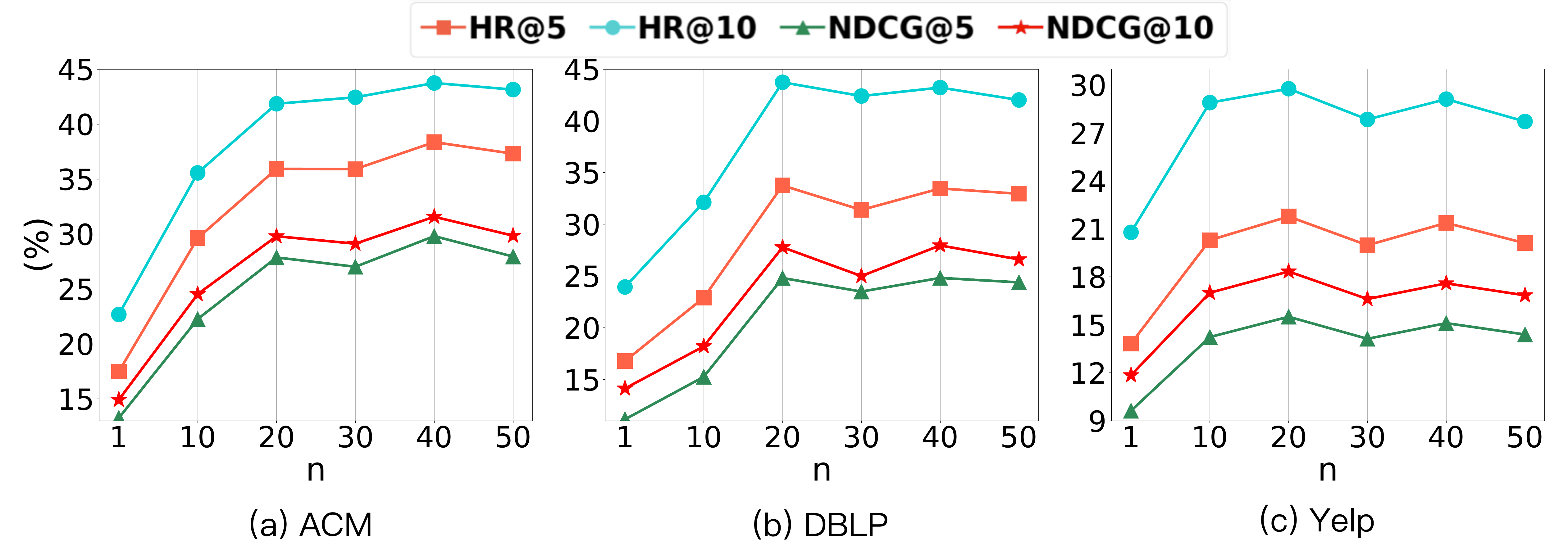}
    \caption{Effects of different number $n$ of shared HINs}
    \label{fig:shared_num}
\end{figure}

\begin{figure}[t]
    \centering
    \includegraphics[scale=.1]{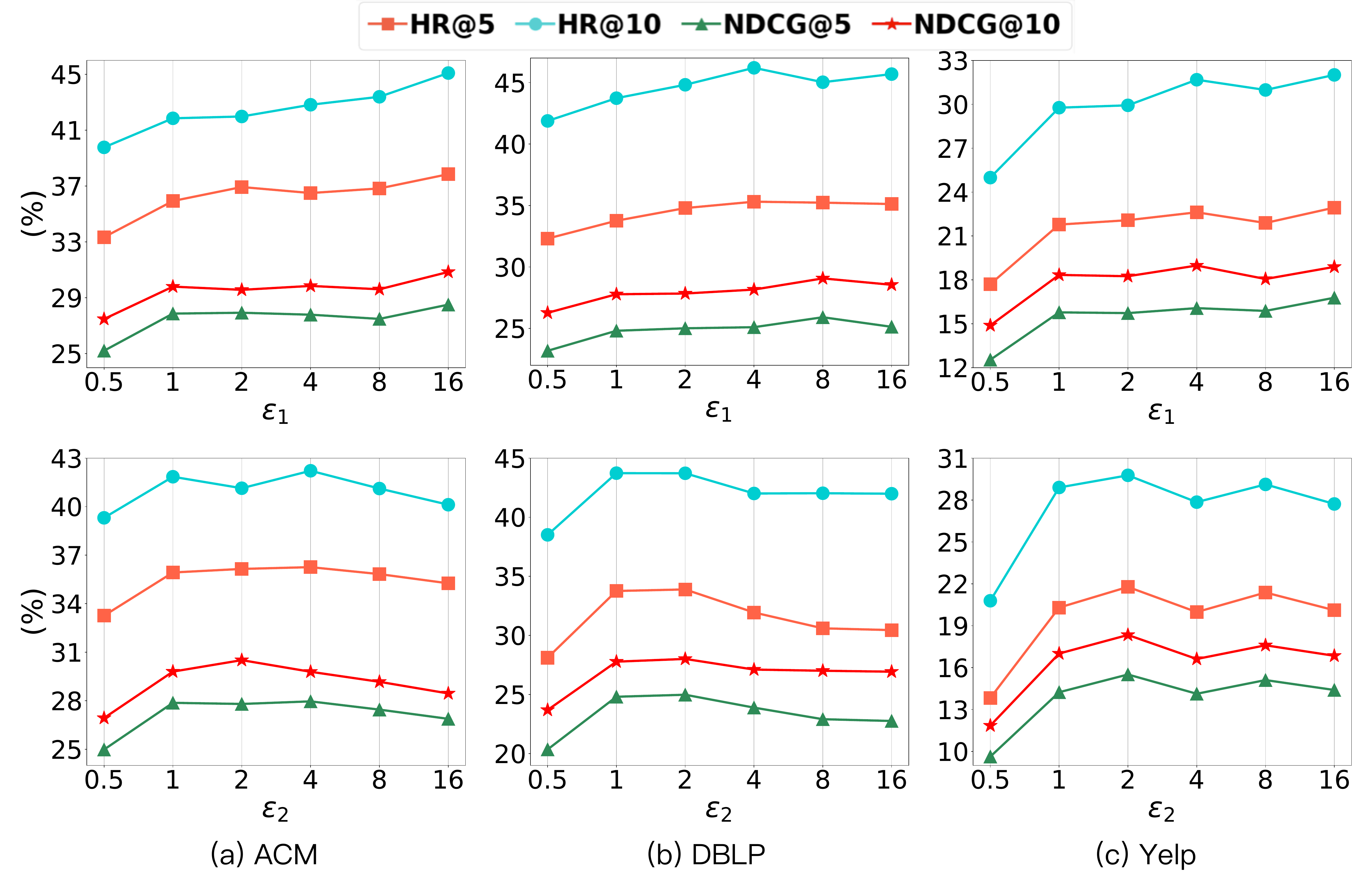}
    \caption{Effects of different privacy budget $\epsilon_1$ and $\epsilon_2$}
    \label{fig:eps}
\end{figure}

\subsection{Parameter Analysis}
In this section, we investigate the impacts of some significant parameters in FedHGNN, including the number of shared HINs, as well as the privacy budgets $\epsilon_1$ and $\epsilon_2$. 

\textbf{Analysis of different numbers of shared HINs}. To demonstrate the effects of different numbers $n$ of shared HINs, we fix other hyper-parameters unchanged and vary $n$ to compare the performance. The results are depicted in Figure \ref{fig:shared_num}. Considering two extreme conditions: when $n=1$, the two-stage perturbation degenerates to solely second-stage perturbation, i.e., perturbation by DPRR on the whole item set, which fails to preserve user-item interaction patterns as discussed in Section \ref{ab_study}; When $n=|I|$, according to Eq. (\ref{eq:q}), it is equivalent to performing RR in each 1's bit after the first-stage perturbation, which intuitively perform better than $n \leq |I|$. According to this theory, the performance will increase when $n$ is larger. However, as can be seen, the performance of all datasets has a dramatic incremental trend at the initial stage of increasing $n$ ($n \leq 20$), then the curve becomes smooth and even has a decrement trend ($n \geq 20$). We attribute this phenomenon to that the model with perturbed user-item interactions is performing better than with true interactions, as depicted in Table \ref{tab:ablation}, and a large $n$ will reduce this effect. In summary, $n$ controls the trade-off between utility and privacy, a larger $n$ may bring relatively higher utility but weaker privacy protection, since the attacker can conclude the user-item interactions within a smaller scope. 

\textbf{Analysis of different privacy budgets}. To analyze the effects of different $\epsilon_1$ and $\epsilon_2$, we fix one parameter as 1 and change another one from 0.5 to 16 to depict the model performance in Figure. \ref{fig:eps}. $\epsilon_1$ controls the protection strength of user's high-order patterns (related-shared HINs). We can see that the metrics gradually increase with $\epsilon_1$, indicating the user's high-order patterns are significant for recommendation and these patterns are undermined when $\epsilon_1$ is too small (e.g., 0.5). When fixing $\epsilon_1=1$, the performance curve of $\epsilon_2$ will first increase and then slightly decrease. We suppose that due to the perturbation of user's high-order patterns in the first stage, the second stage perturbation is conducted on contaminated interaction patterns. As a result, the performance may still drop when $\epsilon_2$ is large. It also shows that conducting moderate perturbation will promote the performance (e.g., $\epsilon_2=1$). 

% \begin{figure}[!t]
%     \centering
%     \includegraphics[scale=.245]{figures/num_neg.pdf}
%     \caption{Analysis of different number of negative samples in pair-wise contrastive module.}
%     \label{fig:num_neg}
% \end{figure}

% We also analyze the significance of negative sample numbers $n$ in the pair-wise contrastive module. Note that every event only has one center node, so for the center node with type $\phi_i$, we sample negative nodes whose type is not $\phi_i$. Figure \ref{fig:num_neg} shows model performance under different number of negative samples. As can be seen, The performance on IMDB and Meituan datasets are relatively stable with different $n$. however, the model suffers performance decay when $n$ is large ($n>80$) on the Aminer dataset. Since we have few limitations on the sampling strategy of negative samples, when $n$ is large, there may involve some entities whose features are similar to the target entities. So in practice, we set $n=10$ in all datasets. We also test other sampling strategies. For example, like \cite{DBLP:conf/ijcai/ZhaoYWYD21}, we first perform node clustering and then sample negative samples from different clusters. This strategy improves the performance to some extent and the performance is relatively better when $n$ increases. More sampling strategies are expected to study in further work.

% \begin{figure}[t]
%     \centering
%     \includegraphics[scale=.5]{figures/attention_w.pdf}
%     \caption{Visualization of node pair similarity score on Aminer dataset.}
%     \label{fig:visual}
% \end{figure}

\section{Conclusion}
In this paper, we first explore the challenge problem of HIN-based FedRec. We formulate the privacy in federated HIN and propose a semantic-preserving user-item publishing method with rigorous privacy guarantees. Incorporating this publishing method into advanced heterogeneous graph neural networks, we propose a FedHGNN framework for recommendation. Experiments show that the model achieves satisfied utility under an accepted privacy budget. 

\begin{acks}
This work is supported in part by the National Natural Science Foundation of China (No. U20B2045, 62192784, U22B2038, 62002029, 62172052), the JSPS KAKENHI (No. JP22H00521, JP22H03595), and the BUPT Excellent Ph.D. Students Foundation (No. CX2021118).
\end{acks}

%%
%% The next two lines define the bibliography style to be used, and
%% the bibliography file.
% \newpage
\bibliographystyle{ACM-Reference-Format}
\bibliography{sample-base}

%%
% If your work has an appendix, this is the place to put it.
\newpage
\appendix

\section{Related work}
\label{app:rel}
\textbf{HIN-based recommendation}. HIN contains rich semantics for recommendation, which has been extensively studied in recent years \cite{DBLP:conf/kdd/ZhengMGZ21, DBLP:conf/cikm/XuLHLX019, DBLP:conf/cikm/HuSZY18}. Specifically, HERec \cite{DBLP:journals/tkde/ShiHZY19} utilizes a meta-path based random walk to generate node sequences and designs multiple fusion functions to enhance the recommendation performance. MCRec \cite{DBLP:conf/kdd/HuSZY18} designs a co-attention mechanism to explicitly learn the interactions between users, items, and meta-path based context. In recent years, HGNNs have been introduced for HIN modeling. To tackle the heterogeneous information, one line aggregates neighbors after transforming heterogeneous attributes into the same embedding space \cite{DBLP:conf/www/HuDWS20, DBLP:conf/esws/SchlichtkrullKB18}. Typically, RGCN \cite{DBLP:conf/esws/SchlichtkrullKB18} aggregates neighbors for each relation type individually. HetGNN \cite{DBLP:conf/kdd/ZhangSHSC19} adopts different RNNs to aggregate nodes with different types. HGT \cite{DBLP:conf/www/HuDWS20} introduces transformer architecture \cite{DBLP:conf/nips/VaswaniSPUJGKP17} for modeling heterogeneous node and edge types. Another line is performing meta-path based neighbor aggregation \cite{DBLP:conf/www/WangJSWYCY19, DBLP:conf/kdd/FanZHSHML19, DBLP:conf/aaai/JiZWSWTLH21}. HAN \cite{DBLP:conf/www/WangJSWYCY19} proposes a dual attention mechanism to learn the importance of different meta-paths. HGSRec \cite{DBLP:conf/aaai/JiZWSWTLH21} further designs tripartite heterogeneous GNNs to perform shared recommendations. Unlike HAN performing homogeneous neighbor aggregation, Meirec \cite{DBLP:conf/kdd/FanZHSHML19} proposes a meta-path guided heterogeneous neighbor aggregation method for intent recommendation. Despite the great effectiveness of these HIN-based recommendations, they are all designed under centralized data storage and not geared for the federated setting with privacy-preserving requirements. 

\textbf{Federated recommendation}.
Federated learning (FL) is proposed to collaboratively train a global model based on the distributed data\cite{DBLP:conf/aistats/McMahanMRHA17, DBLP:series/lncs/12500,DBLP:journals/corr/abs-2104-07145}. Accordingly, the global model in FedRec is collectively trained based on the user's local interaction data \cite{DBLP:conf/aaai/LiangP021, DBLP:conf/kdd/MuhammadWOTSHGL20, DBLP:journals/tois/LinPYM23}. Each client maintains a local recommendation model and uploads intermediate data to the server for aggregation. In this process, the user’s interaction behaviors (the set of interacted items or rating scores) should be protected \cite{DBLP:conf/www/0001WSLZW22, DBLP:conf/www/YuanYNCHY23, DBLP:journals/corr/abs-2205-11857}. FCF \cite{DBLP:journals/corr/abs-1901-09888} is the first FedRec framework, which is based on the traditional collaborative filter (FCF). The user embeddings are stored and updated locally while the gradients of item embeddings are uploaded to the server for aggregation. FedMF \cite{DBLP:journals/expert/ChaiWCY21} proves that the uploaded gradients of two continuous steps can also leak user privacy and thus applies homomorphic encryption to encrypt gradients. SharedMF \cite{DBLP:journals/corr/abs-2008-07759} utilizes secret sharing instead of homomorphic encryption for better efficiency and FR-FMSS \cite{DBLP:conf/recsys/LinP021} further randomly samples fake ratings for better privacy. To achieve personalization, PFedRec removes user embedding and adds a parameterized score function. It keeps the score function locally updated and finetunes the global item embeddings. Recently, GNN-based FedRec has emerged \cite{wu2022federated, DBLP:journals/tist/LiuYFPY22, DBLP:conf/cikm/LuoXS22}. FedGNN \cite{wu2022federated} applies local differential privacy (LDP) to upload gradient and samples pseudo-interacted items for anonymity. Besides, a trusted third-party server is utilized to obtain high-order neighbors. To perform federated social recommendations, FedSoG \cite{DBLP:journals/tist/LiuYFPY22} employs a relation attention mechanism to learn local node embeddings and proposes a pseudo-labeling method to protect local private interactions. Considering personalization and communication costs, PerFedRec \cite{DBLP:conf/cikm/LuoXS22} clusters users and learns a personalized model by combining different levels of parameters. To reduce communication costs, SemiDFEGL \cite{DBLP:conf/www/QuTZNHSY23} proposes a semi-decentralized federated ego graph learning framework, which incorporates device-to-device communications by a fake common items sharing mechanism. Neither these FedRec methods utilize the rich semantics of HINs nor have rigorous privacy guarantees.

\section{Description of Baselines}
\label{app:baselines}

The detailed descriptions of baselines are presented as follows:

\begin{itemize}[leftmargin=*]

\item \textbf{HERec} \cite{DBLP:journals/tkde/ShiHZY19} is a HIN-enhanced recommender framework based on matrix factorization. It first trains HIN-based embeddings and then incorporates them into matrix factorization.

\item \textbf{HAN} \cite{DBLP:conf/www/WangJSWYCY19} introduces meta-path based neighbor aggregation to learn node embeddings. We utilize the learned embeddings to perform recommendations in an end-to-end fashion.

\item \textbf{NGCF} \cite{DBLP:conf/sigir/Wang0WFC19} is a GNN-based recommender method which learns embedding via message passing on user-item bipartite graph.

\item \textbf{lightGCN} \cite{DBLP:conf/sigir/0001DWLZ020} improves NGCF by removing feature transformation and nonlinear activation.

\item \textbf{RGCN} \cite{DBLP:conf/esws/SchlichtkrullKB18} utilizes GCN to learn node embeddings of knowledge graph. It performs message-passing along different relations.

\item \textbf{HGT} \cite{DBLP:conf/www/HuDWS20} proposes a transformer architecture for HIN modeling, which conducts heterogeneous attention when aggregating neighbors.

\item \textbf{FedMF} \cite{DBLP:journals/expert/ChaiWCY21} is a FedRec framework based on matrix factorization. The user embedding is updated locally and the encrypted item gradient is uploaded to the server for aggregation.
 
\item \textbf{FedGNN} \cite{wu2022federated} is a GNN-based FedRec framework. It obtains high-order user neighbors through a third-party trustworthy server and utilizes pseudo-interacted item sampling to achieve privacy-preserving. 

\item \textbf{FedSog} \cite{DBLP:journals/tist/LiuYFPY22} is another GNN-based FedRec framework. It proposes a relational graph attention network to perform social recommendations. 

\item \textbf{PerFedRec} \cite{DBLP:conf/cikm/LuoXS22} is a personalized FedRec method. It performs user clustering on the server and then aggregates parameters within each cluster to achieve personalization.

\item \textbf{PFedRec} \cite{DBLP:conf/ijcai/ZhangL0YZZY23} is another personalized FedRec method. Compared to FedMF, it removes user embeddings and adds a parameterized score function. It achieves personalization by locally updating score function and finetuning item embeddings.

\item \textbf{SemiDFEGL} \cite{DBLP:conf/www/QuTZNHSY23} is a semi-decentralized GNN-based FedRec framework. It introduces device-to-device collaborative learning to reduce communication costs and utilizes fake common items to exploit high-order graph structures.

\end{itemize}

\section{Algorithm}
\label{app:alg}
The whole process of FedHGNN is presented in Algorithm \ref{alg}. Each client executes the function \texttt{Semantic\_preserving\_perturb} before federated training to obtain the perturbed local user-item interaction data $\tilde{a^u_s}$ and upload it to the server. Server obtains the required meta-path based neighbors $\{\mathcal{N}_u^{\rho}\}_{\rho \in \mathcal{P}, u \in U}$ for each client based on $\{\tilde{a^u_s}\}_{u \in U}$ and then distributes $\{\mathcal{N}_u^{\rho}\}_{\rho \in \mathcal{P}, u \in U}$ to each client. After this one-step distribution, clients begin to collaboratively train a global recommendation model based on recovered neighbors. In each communication round, the sampled clients locally execute the function $\texttt{HGNN\_for\_rec}$ to train the recommendation model and upload the gradients to the server for aggregation. Note that for better privacy protection, we also sample pseudo items during local training and apply LDP to the uploaded gradients. This communication is executed multiple times until the model convergence.

\begin{algorithm}
    \renewcommand{\algorithmicrequire}{\textbf{Input:}}
    \renewcommand{\algorithmicensure}{\textbf{Output:}}
    \caption{FedHGNN} 
    \label{alg}
    % \begin{flushleft}
    % \textbf{Input \hspace{0.15cm}:}meta-path set: $\mathcal{P}$ \\
    % \hspace{1.25cm} client raw graph: $g^u$ \\
    % \hspace{1.25cm} shared HIN set: $\mathcal{G}_s$ \\
    % \hspace{1.25cm} the raw feature of $u_i:\ h_{u_i}$ \\
    % \textbf{Output:}perturbed user adjacency matrix: $\tilde{a^u}$
    % \end{flushleft}
    % \label{alg:algorithm_label}
    \begin{algorithmic}[1]
        \REQUIRE  Meta-paths:$\mathcal{P}$; Learning rate:$\eta$; Shared HIN set: $\mathcal{G}_s$; The number of pseudo items:$p$
        \ENSURE Model parameters and embeddings: $\Theta$
        \STATE Server initializes $\Theta$ and distributes $\mathcal{G}_s$ to each client $u \in U$
        \FOR{each client $u \in U$}
            \STATE $\tilde{a^u_s}$ = $\texttt{Semantic\_preserving\_perturb}$($u$, $\mathcal{G}_s$)
            \STATE Upload $\tilde{a^u_s}$ to server
        \ENDFOR

        \STATE Server obtains meta-path based neighbors $\{\mathcal{N}_u^{\rho}\}_{\rho \in \mathcal{P}, u \in U}$ based on $\{\tilde{a^u_s}\}_{u \in U}$ 
        \STATE Distributes $\{\mathcal{N}_u^{\rho}\}_{\rho \in \mathcal{P}, u \in U}$ to corresponding client $u$
        \WHILE{\textit{not converge}}
            \STATE Server samples a client set $U_c \subset U$
            \STATE Distributes $\Theta$ to each client $u \in U_c$
            \FOR{each client $u \in U_c$}
                \STATE $\tilde{\text{g}_{\Theta}^u}= \texttt{HGNN\_for\_rec}$($\{N_u^{\rho}\}_{\rho \in \mathcal{P}}, \mathcal{P}, \Theta$)
                \STATE Upload $\tilde{\text{g}_{\Theta}^u}$ to server
            \ENDFOR
            \STATE Server aggregates $\{\tilde{\text{g}_{\Theta}^u}\}_{u \in U_c}$ into $\bar{\text{g}_{\Theta}}$
            \STATE Update $\Theta = \Theta - \eta \cdot \bar{\text{g}_{\Theta}}$ 
        \ENDWHILE 
        
        ~\\
        %semantic_preserving_perturb
        \STATE $\textbf{Function}$ 
        $\texttt{Semantic\_preserving\_perturb}$ ($u$, $\mathcal{G}_s$):
        \\// User-related shared HIN perturbation
        \FOR{$s = 1, \cdots, |\mathcal{G}_s^u|$} 
            \STATE Select a shared HIN $G_s$ from $\mathcal{G}_s$ with probability by Eq. (3)
        \ENDFOR
        \STATE Obtain perturbed $\tilde{g^u}$ and $\tilde{\mathcal{G}_s^u}$ based on above selections
        \\// User-item interaction perturbation
        \STATE Obtain user-item adjacency list $a^u_s$ based on $\tilde{g^u}$ and $\tilde{\mathcal{G}_s^u}$
        \STATE Obtain $\tilde{a^u_s}$ by perturbing each bit of $a^u_s$ with probability by Eq. (4)
        \STATE \textbf{return}  $\tilde{a_s^u}$
    
        ~\\
        %HGNN_for_rec
        \STATE $\textbf{Function}$ $\texttt{HGNN\_for\_rec}$ ($\{N_{u_i}^{\rho}\}_{\rho \in \mathcal{P}}, \mathcal{P}, \Theta$):
        \STATE Initialize local parameters with $\Theta$
        \\//user embedding
        \FOR{each $\rho \in \mathcal{P}$} 
            \STATE Obtain meta-path based user embeddings $\{z^{\rho}_{u_i}\}_{u_i \in U}$ using Eq. (7)
            \STATE Calculate $\beta^{\rho} $ using Eq. (8)
        \ENDFOR
        \STATE Obtain final user embedding $z_{u_i}$ using Eq. (9)
        \\//item embeddings $\{z_{v_i}\}_{v_i \in V}$ are obtained in the same way
        \STATE sample $p$ pseudo items as positive items
        \STATE Calculate BPR loss $\mathcal{L}_{u_i}$ with $p$ using Eq. (10)
        \STATE Calculate gradients $\text{g}_{\Theta}$ by $\frac{\partial \mathcal{L}_{u_i}}{\partial \Theta}$
        \STATE $\tilde{\text{g}_{\Theta}} \leftarrow$ LDP($\text{g}_{\Theta}$)
        \STATE \textbf{return} $\tilde{\text{g}_{\Theta}}$
            
            %HGNN_for_rec)
            % \STATE $\textbf{Function}$ $\texttt{HGNN\_for\_Rec}$($G^{u_i}_p,h_{u_i},\mathcal{P}$):
            % \STATE $\hspace{0.4cm} \textbf{for} \hspace{0.1cm} u_i, v_j \in G ^{u_i}_{p}$,$\rho_k \in \mathcal{P} \hspace{0.1cm} \textbf{do}$
            % \STATE $\hspace{0.8cm} \textbf{for} \hspace{0.1cm} u_j \in \mathcal{N}^{\rho _k}_{u_i}$, $v_j \in \mathcal{N}^{\rho _k}_{v_i} \hspace{0.1cm} \textbf{do}$
            % \STATE $\hspace{1.2cm}$get $\alpha^{\rho_k}_{u_i u_j} \alpha^{\rho_k}_{v_i v_j} $ from eq(6)
            % \STATE $\hspace{1.2cm}$get the meta-path based embeddings $z^{\rho_k}_{u_i} z^{\rho_k}_{v_i} $ from eq(7)
            % \STATE $\hspace{0.8cm} \textbf{end for}$
            % \STATE $\hspace{0.8cm}$get $\beta^{\rho_k} $ from eq(8)
            % \STATE $\hspace{0.8cm}$get $u_i, v_i$ embedding $z_{u_i} z_{v_i}$ from eq(9)
            % \STATE $\hspace{0.4cm} \textbf{end for}$
            % \STATE $\hspace{0.3cm}$ \textbf{return} $\{z_{u_i}\}, \{z_{v_i}\}$
    
    \end{algorithmic}
\end{algorithm}
%$\hspace{0.4cm}

\section{Additional results of the perturbation}
\label{app:perturbation}

Since we only perturb edges (user-item interactions), the number of nodes is unchanged. We calculate the number and proportion of edges that are unchanged after perturbation, which are presented in Table \ref{tab:perturbation}. We can see that the proportion of edges unchanged after perturbation is extremely small (less than 1\%) in all datasets, indicating that the privacy is maximum protected. In fact, the perturbation algorithm is based on differential privacy (DP) as discussed before, thus the adversary is hard to distinguish whether the unchanged edges really exist in the original graph, which already provides rigorous privacy guarantees. As for the utility, we conduct perturbation in a semantic-preserving manner, thus the semantics are maximum preserved and the number of unchanged edges can’t reflect whether the semantics are lost. Besides, experiments also verify the effectiveness of our perturbation algorithm.

\begin{table}[h]
% \vspace{-2mm}
%\renewcommand{\arraystretch}{1.8} %控制行高
  \centering
  \fontsize{8.5}{8.5}\selectfont
  \caption{The number and proportion of edges that are unchanged after perturbation on four datasets.}
  % \resizebox{\columnwidth}{!}{
    \begin{tabular}{c|c|c|c}
    \hline
    \textbf{Dataset} & \textbf{\# Edges} & \textbf{\makecell*[c]{\# Edges unchanged}} &\textbf{Proportion}
    \\
    \hline
    \hline
    %注意这里p-a的数量应该是全部数量，不应该是只包含训练集的数量
    ACM & 9703 & 19 & 0.002  \\ \hline
    DBLP & 15368 & 79 & 0.005 \\ \hline
    Yelp & 11187 & 11 & 0.001  \\ \hline
    Douban Book & 17083 & 150 & 0.009 \\ \hline
    \end{tabular}%
    % }
\label{tab:perturbation}%
\end{table}%

\section{convergence analysis}
\label{app:convergence}

Following \cite{DBLP:conf/icml/BaekJJYH23}, we give convergence analysis in Figure \ref{fig:conv}, which shows the convergence plots of different FedRec models in ACM and Douban Book datasets. We visualize HR@10 on different communication rounds. It can be observed that our FedHGNN converges rapidly with better results, and we conjecture that by utilizing the recovered semantics, FedHGNN can rapidly capture the user-item interaction patterns. Also, the communication cost of each round of FedHGNN is moderate compared to baselines. Therefore, although some baselines achieve faster convergence (e.g., FedMF and SemiDFEGL), considering the relatively large improvements in performance, it's acceptable of the trade-off between communication cost and utility in FedHGNN.

\begin{figure}[!t]
    \centering
    \includegraphics[scale=.165]{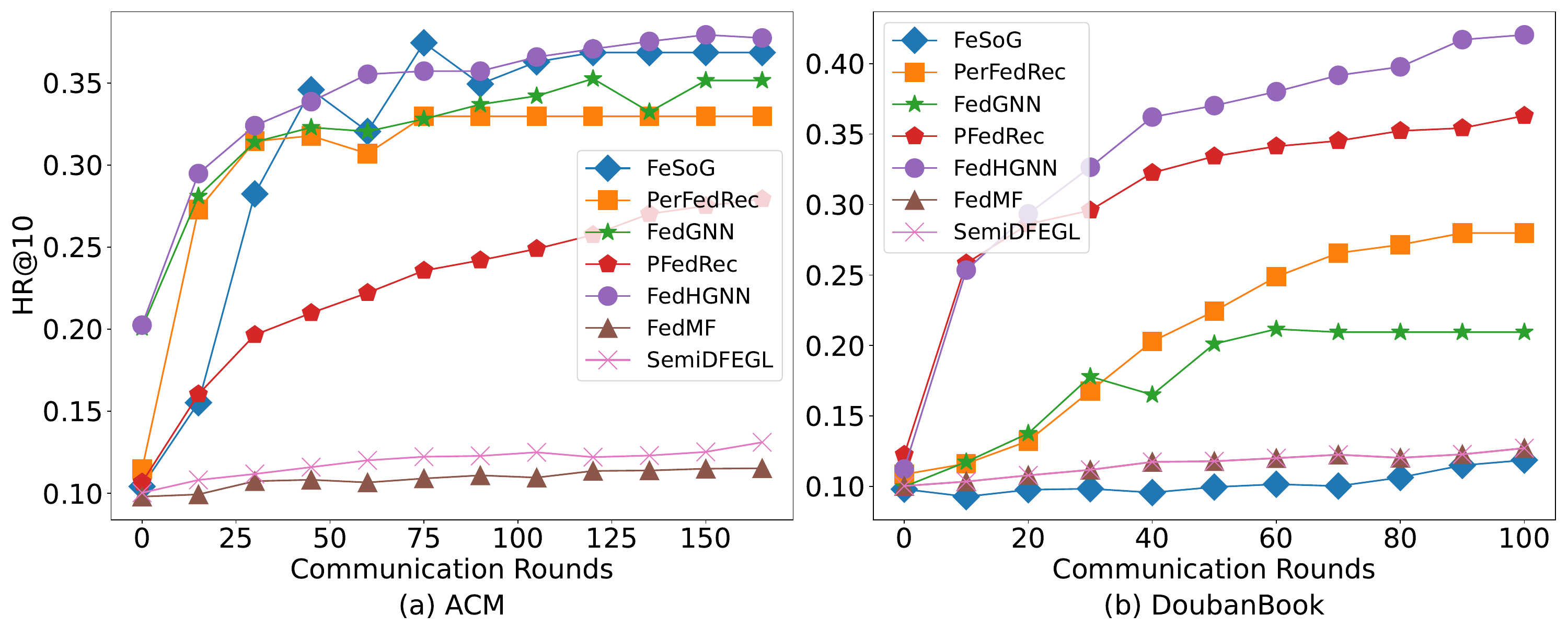}
    \caption{Coonvergence plots of different FedRecs in ACM and Douban Book datasets}
    \label{fig:conv}
\end{figure}

\end{document}